\def\eqref#1{equation~\ref{#1}}
\def\1{\bm{1}}
\DeclareMathAlphabet{\mathsfit}{\encodingdefault}{\sfdefault}{m}{sl}
\SetMathAlphabet{\mathsfit}{bold}{\encodingdefault}{\sfdefault}{bx}{n}
\newtheorem{theorem}{Theorem}
\newtheorem{lemma}{Lemma}
\newcommand{\bfa}{\mathbf{a}}
\newcommand{\bfs}{\mathbf{s}}
\newcommand{\bfx}{\mathbf{x}}
\newcommand{\bfe}{\boldsymbol{\epsilon}}
\title{Diffusion Actor-Critic: Formulating \\Constrained Policy Iteration as Diffusion Noise Regression for Offline Reinforcement Learning}
\author{
Linjiajie Fang$^{1}$
\quad
Ruoxue Liu$^{1}$ 
\quad
Jing Zhang$^{1}$
\quad
Wenjia Wang$^{1, 2}$\thanks{Corresponding authors.}
\quad
Bing-Yi Jing$^{3 \ast}$\\
$^1$ Hong Kong University of Science and Technology\\
$^2$ Hong Kong University of Science and Technology (Guangzhou)\\
$^3$ Southern University of Science and Technology
}
\begin{document}

\maketitle

\begin{abstract}
In offline reinforcement learning, it is necessary to manage out-of-distribution actions to prevent overestimation of value functions. One class of methods, the policy-regularized method, addresses this problem by constraining the target policy to stay close to the behavior policy. Although several approaches suggest representing the behavior policy as an expressive diffusion model to boost performance, it remains unclear how to regularize the target policy given a diffusion-modeled behavior sampler. In this paper, we propose Diffusion Actor-Critic (DAC) that formulates the Kullback-Leibler (KL) constraint policy iteration as a diffusion noise regression problem, enabling direct representation of target policies as diffusion models. Our approach follows the actor-critic learning paradigm in which we alternatively train a diffusion-modeled target policy and a critic network. The actor training loss includes a soft Q-guidance term from the Q-gradient. The soft Q-guidance is based on the theoretical solution of the KL constraint policy iteration, which prevents the learned policy from taking out-of-distribution actions. We demonstrate that such diffusion-based policy constraint, along with the coupling of the lower confidence bound of the Q-ensemble as value targets, not only preserves the multi-modality of target policies, but also contributes to stable convergence and strong performance in DAC. Our approach is evaluated on D4RL benchmarks and outperforms the state-of-the-art in nearly all environments. \footnote{Code is available at \textcolor{magenta}{\url{https://github.com/Fang-Lin93/DAC}}.}
 
\end{abstract}

\section{Introduction}
Offline reinforcement learning (RL) aims to learn effective policies from previously collected data, without the need for online interactions with the environment \citep{levine2020offline}. It holds promise to implement RL algorithm to real-world applications, where online interactions are risky, expensive, or even impossible. However, learning entirely from offline data brings a new challenge. The prior data, such as human demonstration, are often sub-optimal and cover only a small part of samples compared to the entire state-action space. Learning policies beyond the level of behavior policy demands querying the value function of actions which are often not observed in the dataset. Despite off-policy RL algorithms could be directly applied to offline data, those out-of-distribution (OOD) actions exacerbate the bootstrapping error of value function estimation, typically causing overestimation of action-values and leading to poor performance \citep{kumar2019stabilizing}.

To alleviate the problem of overestimation of OOD actions, prior research of policy-regularized algorithm suggests regularizing the learned policy by limiting its deviation from the behavior policy. These methods generally regularize the learned policy by adding a behavior cloning term to the loss function \citep{fujimoto2021minimalist, wu2019behavior, wang2022diffusion} or training a behavior sampler to assist in evaluating the Q-learning target \citep{kumar2019stabilizing, fujimoto2019off, wang2022diffusion, chen2022offline, hansen2023idql}. However, due to the intricacy of the behavior distribution, these methods require sufficient representative capacity of models and appropriate regularization schemes to prevent sampling OOD actions and achieve strong performance \citep{wang2022diffusion}.

With the emergence of diffusion models \citep{sohl2015deep, ho2020denoising}, recent advances on policy-regularized algorithms suggest modeling the behavior policies using high-expressive diffusion models \citep{janner2022planning, wang2022diffusion, chen2022offline, hansen2023idql}. However, there are several limitations with the current implementations of diffusion models in offline RL. Some methods use the diffusion model as a behavior sampler for subsequent action generation \citep{chen2022offline, hansen2023idql}. Those methods require generating lots of action candidates to choose from, which hinders the real-world applications for the slow inference process. Diffusion Q-learning \citep{wang2022diffusion} trains a biased diffusion model to aid in the estimation of the Q-learning target. Nevertheless, the biased diffusion model no longer prevents from sampling OOD actions (Figure \ref{fig:visual} \& \ref{fig:soft_q_guidance}), and the back-propagation of gradients through the denoising process makes the training process time-consuming. Additionally, another drawback of modeling behavior policies as diffusion models is the inability of the diffusion models to explicitly estimate density values. Techniques that rely on access to density functions are not directly applicable given a diffusion-modeled behavior policy \citep{peng2019advantage, nair2020awac}. Furthermore, it remains unclear how to regularize policies to stay close to a diffusion-modeled behavior that is both theoretically sound and effective in practical performance.

In this paper, we propose Diffusion Actor-Critic (DAC) to address the offline RL problem by training a diffusion-modeled target policy. We focus on the optimization problem of constrained policy iteration \citep{schulman2015trust, peng2019advantage, nair2020awac, chen2022offline}, where the target policy is trained to maximize the estimated Q-function while fulfilling the KL constraint of the data distribution. We derive that the optimization problem can be formulated as a diffusion noise regression problem, eliminating the need for explicit density estimation of either the behavior policy or the target policy. The resulting noise prediction target involves a soft Q-guidance term that adjusts the Q-gradient guidance according to the noise scales, which distinguishes it from both the guided sampling with return prompts \citep{janner2022planning, chen2021decision} and methods where the Q-gradient is applied to the denoised action samples \citep{wang2022diffusion}. DAC follows the actor-critic learning paradigm, where we alternatively train a diffusion-modeled target policy and an action-value model. During the actor learning step, we train policy model by regressing on a target diffusion noise in a supervised manner. For the critic learning, we employ the lower confidence bound (LCB) of a Q-ensemble to stabilize the estimation of Q-gradients under function approximation error. This approach prevents the detrimental over-pessimistic bias of taking the ensemble minimum as used in the previous research \citep{fujimoto2018addressing, fujimoto2021minimalist, wang2022diffusion}. Experiments demonstrate that the LCB target balances the overestimation and underestimation of the value target, leading to improved performance.

In conclusion, our main contributions are: 

\begin{itemize}
    \item Introducing DAC, a new offline RL algorithm that directly generates the target policy using diffusion models. The high-expressiveness of diffusion models is able to capture not only the multi-modality of behavior polices, but also the complexity of target polices as well. Moreover, the training of DAC avoids the back-propagation of gradients through the denoising path, which significantly saves the training time for learning diffusion policies.
    
    \item Proposing the soft Q-guidance that analytically solves the KL constraint policy iteration using diffusion models, without the need for explicit density estimation of either the behavior policy or the target policy. The necessity for constraint satisfaction in learning diffusion-modeled target policies is not only crucial for theoretical comprehension but also guarantees that the generated policy refrains from taking OOD actions. 
    
    \item  We demonstrate the effectiveness of DAC on the D4RL benchmarks and observe that it outperforms nearly all prior methods by a significant margin, thereby establishing a new state-of-the-art baseline. Additionally, DAC shows stable convergence and strong performance without the need for online model selection \citep{wang2022diffusion, kang2024efficient}, making it more practical for real-world applications.
\end{itemize}

\section{Preliminaries}

We consider the RL problem formulated as an infinite horizon discounted Markov Decision Process (MDP), which is defined as a tuple $(\mathcal{S}, \mathcal{A}, \mathcal{T}, d_0, r, \gamma)$ \citep{sutton1999reinforcement} with state space $\mathcal{S}$, action space $\mathcal{A}$, transition probabilities $\mathcal{T}(\bfs'|\bfs, \bfa)$, initial state distribution $\bfs_0 \sim d_0$, reward function $r(\bfs, \bfa)$, and discount factor $\gamma \in (0, 1)$. The goal of RL is to train a policy $\pi(\bfa|\bfs): \mathcal{A} \times \mathcal{S} \to [0, 1]$ that maximizes the expected return: $J(\pi) := \mathbb{E}_{\pi, \mathcal{T}, d_0}[\sum_{t=0}^\infty \gamma^t r(\bfs_t, \bfa_t)]$. We also define the discounted state visitation distribution $d^{\pi}(\bfs) := (1-\gamma) \sum_{t=0}^\infty \gamma^t p_\pi(\bfs_t = \bfs)$. Then the RL objective $J(\pi)$ has an equivalent form as maximizing the expected per-state-action rewards: $\tilde{J}(\pi) = \mathbb{E}_{\bfs \sim d^{\pi}, \bfa \sim \pi(\cdot|\bfs)}[r(\bfs, \bfa)]$ \citep{nachum2020reinforcement}. In offline RL, the agent has only access to a static dataset $\mathcal{D}$, which is collected by a potentially unknown behavior policy $\pi_\beta$, without the permission to fetch new data from the environment.

\textbf{Constrained policy iteration.} Let $Q^\pi : \mathcal{S} \times \mathcal{A} \to \mathbb{R}$ be the Q-function of the policy $\pi$, which is defined by $Q^\pi(\bfs, \bfa) = \mathbb{E}_{\pi, \mathcal{T}}[\sum_{t=0}^\infty \gamma^t r(\bfs_t, \bfa_t)|\bfs_0=\bfs,\bfa_0=\bfa]$. In a standard policy iteration paradigm at iteration $k$, the algorithm iterates between improving the policy $\pi_k$ and estimating the Q-function $Q^{\pi_k}$ via Bellman backups \citep{sutton1999reinforcement}. Estimating $Q^{\pi_k}$ in the offline setting may request OOD actions that are not observed in the dataset, resulting in an accumulation of bootstrapping errors. To address this issue, off-policy evaluation algorithms \citep{fujimoto2019off, kumar2019stabilizing, wu2019behavior, schulman2015trust, peng2019advantage, nair2020awac} propose to explicitly regularize the policy improvement step, leading to the constrained optimization problem:
\begin{equation}\label{equ:opt_problem}
\begin{aligned}
    \pi_{k+1} = \arg \max_\pi &\, \mathbb{E}_{\bfs \sim d^{\pi_k}}[\mathbb{E}_{\bfa \sim \pi(\cdot|\bfs)} Q^{\pi_k}(\bfs, \bfa)] \\
    \text{s.t.}&\, D(\pi, \pi_\beta) \leq \epsilon_b.
\end{aligned}
\end{equation}
Commonly used constraints for $D$ are members from $f$-divergence family, such as KL-divergence, $\chi^2$-divergence and total-variation distance \citep{peng2019advantage, nair2020awac, nachum2019algaedice, nachum2020reinforcement}. In this paper we consider $D$ being the expected state-wise (reverse) KL-divergence: $D(\pi, \pi_\beta) = \mathbb{E}_{\bfs\sim d^\pi}D_{\text{KL}}(\pi(\cdot|\bfs)\|\pi_\beta(\cdot|\bfs))$. However, the complicated dependency of $d^{\pi_k}$ on $\pi_k$ makes it difficult to directly solve the KL constraint optimization problem (\ref{equ:opt_problem}) in offline RL. A typical approach for addressing this issue involves substituting the on-policy distribution $d^{\pi_k}$ with the off-policy dataset $\mathcal{D}$ \citep{peng2019advantage, nair2020awac}, resulting in the surrogate objective:
\begin{equation}\label{equ:opt_problem_surrogate}
\begin{aligned}
    \pi_{k+1} = \arg \max_\pi &\, \mathbb{E}_{\bfs \sim \mathcal{D}}[\mathbb{E}_{\bfa \sim \pi(\cdot|\bfs)} Q^{\pi_k}(\bfs, \bfa)] \\
    \text{s.t.}&\, \mathbb{E}_{\bfs\sim \mathcal{D}}[D_{\text{KL}}(\pi(\cdot|\bfs)||\pi_\beta(\cdot|\bfs))] \leq \epsilon_b,
\end{aligned}
\end{equation}
where $\epsilon_b$ is a pre-defined hyperparameter to control the strength of the constraint.
 
\textbf{Diffusion models.} Diffusion models \citep{sohl2015deep, ho2020denoising, song2020score} are generative models that assumes latent varibles following a Markovian noising and denoising process. The forward noising process $\{\bfx_{0:T}\}$ gradually adds Gaussian noise to the data $\bfx_0 \sim p(\bfx_0)$ with a pre-defined noise schedule $\{\beta_{1:T}\}$:
\begin{equation}\label{equ:forward_process}
    q(\bfx_{1:T}|\bfx_0) = \prod_{t=0}^T q(\bfx_t | \bfx_{t-1}),\; q(\bfx_t|\bfx_{t-1}) : = \mathcal{N}(\bfx_t; \sqrt{1-\beta_t}\bfx_{t-1}, \beta_t\mathbf{I}).
\end{equation}
The joint distribution in (\ref{equ:forward_process}) yields an analytic form of the marginal distribution 
\begin{equation}\label{equ:qt}
    q_t(\bfx_t|\bfx_0) = \mathcal{N}(\bfx_t; \sqrt{\bar{\alpha}_t}\bfx_0, (1-\bar{\alpha}_t) \mathbf{I}) \text{ for all } t \in \{1, ..., T\},
\end{equation}
using the notation $\alpha_t := 1 - \beta_t$ and $\bar{\alpha}_t := \prod_{s=1}^t \alpha_s$. Given $\bfx_0$, the noisy sample $\bfx_t$ can be easily obtained through the re-parameterization trick: 
\begin{equation}\label{equ:add_noise}
    \bfx_t = \sqrt{\bar{\alpha}_t}\bfx_0 + \sqrt{1-\bar{\alpha}_t} \bfe, \; \bfe \sim \mathcal{N}(\mathbf{0}, \mathbf{I}).
\end{equation}
DDPMs \citep{ho2020denoising} use parameterized models $p_\theta(\bfx_{t-1}|\bfx_t) = \mathcal{N}(\bfx_{t-1}; \boldsymbol{\mu}_\theta (\bfx_t, t), \boldsymbol{\Sigma}_\theta (\bfx_t, t))$ to reverse the diffusion process: $p_\theta (\bfx_{0:T}) = \mathcal{N}(\bfx_T; \mathbf{0}, \mathbf{I})\prod_{t=1}^T p_\theta(\bfx_{t-1}|\bfx_t)$. The practical implementation involves directly predicting the Gaussian noise $\bfe$ in (\ref{equ:add_noise}) using a neural network $\bfe_\theta (x_t, t)$ to minimize the original evidence lower bound loss:
\begin{equation}\label{equ:ddpm}
    \mathcal{L}(\theta)= \mathbb{E}_{\bfx_0 \sim p(\bfx_0), t\sim \text{Unif}(1,T), \bfe \sim \mathcal{N}(\mathbf{0}, \mathbf{I})}||\bfe- \bfe_\theta(\sqrt{\bar{\alpha}_t}\bfx_0 + \sqrt{1-\bar{\alpha}_t} \bfe, t) ||^2.
\end{equation}
A natural approach to employing diffusion models in behavior cloning involves replacing the noise predictor with a state-conditional model $\bfe_\theta (\bfx_t, \bfs, t)$ that generates actions $\bfx_0 \in \mathcal{A}$ based on state $\bfs$. 

\textbf{Score-based models.} The key idea of score-based generative models \citep{vincent2011connection, song2019generative, song2020improved} is to estimate the (Stein) score function, which is defined as the gradient of the log-likelihood $\nabla_\bfx \log p(\bfx)$. Like diffusion models, score-based models perturb the data with a sequence of Gaussian noise and train a deep neural network $s_\theta(\bfx_t, t)$ to estimate the score $\nabla_{\bfx_t} \log p(\bfx_t)$ for noisy samples $\bfx_t \sim \mathcal{N}(\bfx_t; \bfx_0, \sigma_t^2\mathbf{I})$ on different noise levels $t=1,2,...,T$. The objective of explicit score matching \citep{vincent2011connection} is given by:
\begin{equation}
    \mathbb{E}_{x_0\sim p, x_t \sim \mathcal{N}(\bfx_t; \bfx_0, \sigma_t^2\mathbf{I}), t\sim \text{Unif}(1,T)}[\lambda(t)||\nabla_\bfx \log p(\bfx) - s_\theta(\bfx_t, t)||^2],
\end{equation}
where $\lambda(t) > 0$ is a positive weighting function. Once the estimated score functions have been trained, samples are generated using score-based sampling techniques, such as Langevin dynamics \citep{song2019generative} and stochastic differential equations \citep{song2020score}.

\begin{figure}[t]
  \centering
  \includegraphics[width=0.9\linewidth]{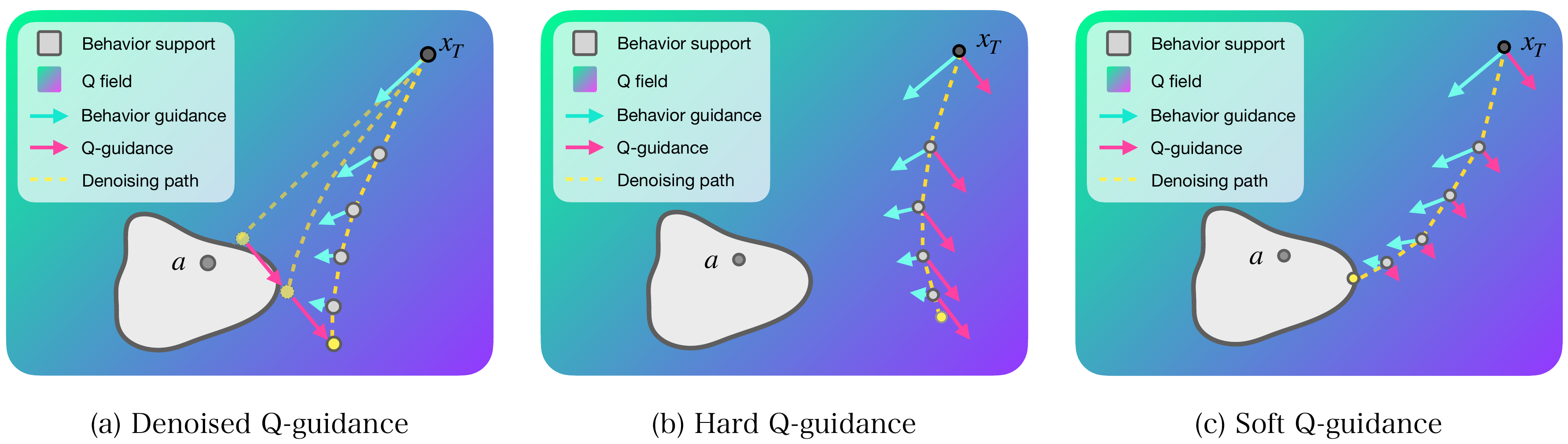}
  \caption{A visual explanation of generating actions from noisy prior $x_T$ using (a) denoised Q-guidance (b) hard Q-guidance and (c) soft Q-guidance. The soft Q-guidance reduces the intensity of Q-guidance during the denoising steps, generating high-reward actions within the behavior support without the need to backpropagate the gradient through the path.}
  \label{fig:visual}
\end{figure}

\section{Diffusion Actor-Critic}

In this section, we introduce the Diffusion Actor-Critic (DAC) framework that models the target policy directly as a diffusion model, eliminating the need for density estimation of either the behavior policy or the target policy. Initially, we formulate the KL constraint policy optimization as a diffusion noise regression problem, which yields a soft Q-guidance term for the noise prediction process that enables the learning of the target policy in a supervised manner. Additionally, we introduce Q-ensemble to stabilize the Q-gradient estimation, which utilizes LCB to mitigate the over-pessimistic estimation associated with taking the ensemble minimum in prior research.

\subsection{Learning diffusion policy through soft Q-guidance}

The problem of behavior constraint policy iteration (\ref{equ:opt_problem_surrogate}) has a closed form solution $\pi^*_{k+1}$ \citep{peng2019advantage, nair2020awac, chen2022offline} by utilizing the Lagrangian multiplier:
\begin{equation} \label{equ:opt_solution}
 \pi^*_{k+1}(\bfa|\bfs) = \frac{1}{Z(\bfs)} \pi_\beta (\bfa|\bfs) \exp \big (\frac{1}{\eta} Q^{\pi_k}(\bfs, \bfa) \big),
\end{equation}
where $\eta > 0$ is a Lagrangian multiplier and $Z(\bfs)$ is a state-conditional partition function. Obtaining the closed form solution of $\pi^*_{k+1}$ directly from (\ref{equ:opt_solution}) is challenging as it requires estimation of the density function of the behavior policy $\pi_\beta$ and the partition function $Z(\bfs)$. Prior methods \citep{peng2019advantage, nair2020awac, chen2022offline} suggest addressing this issue by projecting $\pi^*_{k+1}$ onto a parameterized policy $\pi_\theta$ using KL-divergence:
\begin{equation}
    \arg\min_\theta \mathbb{E}_{\bfs \sim \mathcal{D}}[D_{KL}(\pi^*_{k+1}(\cdot|\bfs)\|\pi_\theta(\cdot|\bfs))],
\end{equation}
resulting in the policy update algorithm:
\begin{equation}\label{equ:awr}
 \theta_{k+1} = \arg\max_\theta \mathbb{E}_{(\bfs, \bfa)\sim \mathcal{D}}[\log \pi_\theta (\bfa|\bfs)\exp\big(\frac{1}{\eta}Q^{\pi_k}(\bfs,\bfa)\big)].
\end{equation}
Although (\ref{equ:awr}) eliminates the necessity for estimating the partition function $Z(\bfs)$ and the behavior policy $\pi_\beta$, it needs explicit modeling of the density function of the target policy $\pi_\theta$. Such requirement for $\pi_\theta$ makes it unfeasible to directly use diffusion generative models due to the unavailability of density function estimation. Prior methods employ Gaussian policies to estimate $\pi_\theta$ \citep{peng2019advantage, nair2020awac}, which limits the expressiveness of the target policy. To address these issues, we rewrite (\ref{equ:opt_solution}) using score functions:
\begin{equation} \label{equ:score_fn}
 \nabla_\bfa \log \pi^*_{k+1}(\bfa|\bfs) = \nabla_\bfa \log \pi_\beta (\bfa|\bfs) + \frac{1}{\eta} \nabla_\bfa Q^{\pi_k}(\bfs, \bfa), \; \bfa \in \mathcal{A},
\end{equation}
where the action space $\mathcal{A}$ is usually a compact set in $\mathbb{R}^d$ for $d$-dimensional actions. It seems that the target score function $\nabla_\bfa \log \pi^*_{k+1}(\bfa|\bfs)$ can be trained by regression on the right-hand-side. However, since $\pi_\beta$ is unknown, we do not have the explicit regression target $\nabla_\bfa \log \pi_\beta (\bfa|\bfs)$.
Drawing inspiration from explicit score matching with finite samples \citep{vincent2011connection}, we smoothly extend the policy functions and the value function defined in $\mathcal{A}$ to the extended action space $\mathbb{R}^d$. Then we consider (\ref{equ:score_fn}) of the optimal score functions to hold for noisy perturbations of the observation set:
\begin{equation} \label{equ:esm}
 \nabla_{\bfx_t} \log p_t^*(\bfx_t|\bfs) = \nabla_{\bfx_t} \log p_t (\bfx_t|\bfs) + \frac{1}{\eta} \nabla_{\bfx_t} Q^{\pi_k}(\bfs, \bfx_t), \; \bfx_t \in \mathbb{R}^d,
\end{equation}
where $p_t^*(\bfx_t|\bfs) = \int q_t(\bfx_t|\bfa)\pi^*_{k+1}(\bfa|\bfs) d \bfa$ and $p_t(\bfx_t|\bfs) = \int q_t(\bfx_t|\bfa)\pi_\beta(\bfa|\bfs) d \bfa$ are noise distributions. The noisy perturbation $\bfx_t \sim q_t(\bfx_t|\bfa)$ is defined in (\ref{equ:qt}) with $\bfx_0 = \bfa$. When the perturbation is small, i.e. $q_t(\bfx_t|\bfa)\approx\delta(\bfx_t - \bfa)$, then $p_t^*(\bfx_t|\bfs) \approx \pi^*_{k+1}(\bfa|\bfs)$ and $p_t(\bfx_t|\bfs) \approx \pi_\beta(\bfa|\bfs)$, which recovers the relationship between score functions within the action space $\mathcal{A}$ as described in (\ref{equ:score_fn}). Tackling the score function of noise distribution is favorable, since $\nabla_{\bfx_t} \log p_t^*(\bfx_t|\bfs)$ itself serves as a means of generating $\pi^*_{k+1}$ using diffusion models, without the need for score-based sampling methods such as Langevin dynamics, as described in the following theorem. 
\begin{theorem}\label{thm:diffusion}
    Let $\bfe^*(\bfx_t, \bfs, t) := - \sqrt{1-\bar{\alpha}_t} \, \nabla_{\bfx_t} \log p_t^* (\bfx_t|\bfs)$. Then $\bfe^*(\bfx_t, \bfs, t)$ is a Gaussian noise predictor which defines a diffusion model for generating $\pi^*_{k+1}$. 
\end{theorem}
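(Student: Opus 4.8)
The plan is to recognize $\bfe^*(\bfx_t, \bfs, t)$ as the population minimizer of the standard noise-prediction objective (\ref{equ:ddpm}) in which the clean data distribution is taken to be the target policy $\pi^*_{k+1}(\cdot|\bfs)$, smoothly extended from $\mathcal{A}$ to $\R^d$. The key structural observation is that the distributions $p_t^*(\cdot|\bfs) = \int q_t(\bfx_t|\bfa)\pi^*_{k+1}(\bfa|\bfs)\,d\bfa$ appearing in (\ref{equ:esm}) are \emph{exactly} the forward diffusion marginals (\ref{equ:qt}) of a DDPM whose clean distribution is $\pi^*_{k+1}(\cdot|\bfs)$. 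Consequently, once I show that $\bfe^*$ coincides with the optimal noise predictor for that DDPM, the claim that it ``defines a diffusion model for generating $\pi^*_{k+1}$'' follows from the standard DDPM construction of Section~2: feeding this predictor into the reverse kernels $p_\theta(\bfx_{t-1}|\bfx_t)$ produces a reverse process whose marginals agree with $p_t^*$, and since $\bar{\alpha}_0 = 1$ forces $q_0(\cdot|\bfa)=\delta(\cdot-\bfa)$ so that $p_0^*(\cdot|\bfs)=\pi^*_{k+1}(\cdot|\bfs)$, the terminal law is precisely $\pi^*_{k+1}$.

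The computational heart is the identity between the score $\nabla_{\bfx_t}\log p_t^*(\bfx_t|\bfs)$ and the conditional mean of the injected noise (Tweedie's formula). First I would write the noise-prediction loss
\[
\Ls(\bfe_\theta) = \E_{\bfa\sim\pi^*_{k+1}(\cdot|\bfs),\, t,\, \bfe}\big\|\bfe - \bfe_\theta(\sqrt{\bar{\alpha}_t}\bfa + \sqrt{1-\bar{\alpha}_t}\,\bfe,\, \bfs, t)\big\|^2,
\]
and observe that for each fixed $(\bfs,t)$ the pointwise $L^2$ minimizer over measurable functions is the conditional expectation $\hat{\bfe}(\bfx_t,\bfs,t) = \E[\bfe\mid \bfx_t,\bfs]$ under the joint law of $(\bfa,\bfe)$ conditioned on $\bfx_t = \sqrt{\bar{\alpha}_t}\bfa + \sqrt{1-\bar{\alpha}_t}\,\bfe$. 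Then, differentiating $p_t^*$ under the integral sign and using that $q_t$ is Gaussian, so $\nabla_{\bfx_t} q_t(\bfx_t|\bfa) = -\tfrac{\bfx_t - \sqrt{\bar{\alpha}_t}\bfa}{1-\bar{\alpha}_t}\, q_t(\bfx_t|\bfa)$, and recognizing $q_t(\bfx_t|\bfa)\pi^*_{k+1}(\bfa|\bfs)/p_t^*(\bfx_t|\bfs)$ as the posterior of $\bfa$ given $\bfx_t$ via Bayes' rule, I obtain
\[
\nabla_{\bfx_t}\log p_t^*(\bfx_t|\bfs) = -\frac{1}{1-\bar{\alpha}_t}\big(\bfx_t - \sqrt{\bar{\alpha}_t}\,\E[\bfa\mid\bfx_t,\bfs]\big) = -\frac{1}{\sqrt{1-\bar{\alpha}_t}}\,\E[\bfe\mid\bfx_t,\bfs],
\]
where the last step uses $\bfx_t - \sqrt{\bar{\alpha}_t}\bfa = \sqrt{1-\bar{\alpha}_t}\,\bfe$ from (\ref{equ:add_noise}). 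Multiplying by $-\sqrt{1-\bar{\alpha}_t}$ gives $\bfe^*(\bfx_t,\bfs,t) = \E[\bfe\mid\bfx_t,\bfs] = \hat{\bfe}(\bfx_t,\bfs,t)$, identifying $\bfe^*$ as the optimal Gaussian noise predictor.

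I expect the main obstacle to be technical rather than conceptual: justifying the smooth extension of $\pi^*_{k+1}$ (and of $Q^{\pi_k}$, which enters through the decomposition (\ref{equ:esm})) from the compact set $\mathcal{A}$ to $\R^d$ so that $p_t^*(\cdot|\bfs)$ is a well-defined, everywhere-positive, differentiable density, and legitimizing the interchange of gradient and integral (differentiation under the integral via dominated convergence), which needs only mild integrability of $\pi^*_{k+1}$ together with the Gaussian tails of $q_t$. A secondary point worth making explicit is the generative claim: one must invoke the standard DDPM guarantee that the reverse process driven by the exact conditional-mean noise predictor reproduces the forward marginals $p_t^*$, so that the $t=0$ identification $p_0^* = \pi^*_{k+1}$ certifies that ancestral sampling with $\bfe^*$ yields $\pi^*_{k+1}$. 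The score/noise algebra itself is routine once Tweedie's formula is set up.
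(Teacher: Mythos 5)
Your proposal is correct and reaches the paper's conclusion by a genuinely different route. The paper never computes a conditional mean: it observes that $\bfe^*$ is trivially the minimizer of the explicit score-matching loss $\mathbb{E}\|\tilde{\bfe}(\bfx_t,\bfs,t) + \sqrt{1-\bar{\alpha}_t}\,\nabla_{\bfx_t}\log p_t^*(\bfx_t|\bfs)\|^2$ and then shows, in the style of Vincent's explicit-to-denoising score-matching equivalence, that this loss equals the standard denoising objective $\mathbb{E}\|\tilde{\bfe}-\bfe\|^2$ up to constants independent of $\tilde{\bfe}$ --- by expanding the square, interchanging the $\bfx_t$- and $\bfa$-integrals in the cross term, and invoking its Lemma~\ref{lemma:dsm}, $\nabla_{\bfx_t}\log q_t(\bfx_t|\bfa) = -\bfe/\sqrt{1-\bar{\alpha}_t}$. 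You instead prove the \emph{pointwise} identity $\bfe^*(\bfx_t,\bfs,t) = \mathbb{E}[\bfe\mid\bfx_t,\bfs]$ via Tweedie's formula and identify this conditional mean as the Bayes-optimal $L^2$ noise predictor. Both arguments pivot on the same Gaussian-kernel score identity, but yours buys something: it exhibits $\bfe^*$ directly as the (a.e.-unique) population minimizer rather than deducing minimality from a loss-level equivalence, and it makes explicit the generative half of the claim --- reverse kernels driven by $\bfe^*$ together with the $t=0$ identification $p_0^*(\cdot|\bfs)=\pi^*_{k+1}(\cdot|\bfs)$ --- which the paper leaves implicit once the loss equivalence is established. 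What the paper's route buys in exchange is reusability: the identical cross-term integral-swap computation is redeployed almost verbatim in the proof of Theorem~\ref{thm:raw_loss}, where the expectation runs over behavior data $(\bfs,\bfa)\sim\mathcal{D}$ rather than over $\pi^*_{k+1}$, a setting in which your conditional-expectation shortcut does not apply as stated because the regression target $\bfe^*$ is mismatched with the sampling law. One shared caveat: your assertion that the reverse process driven by the exact predictor has marginals agreeing with $p_t^*$ is exact only in the small-step limit (for finite $T$ the Gaussian reverse kernel is itself an approximation, since the true reverse conditional is a Gaussian mixture); the paper asserts the theorem at the same level of rigor, so this is a limitation of the statement, not a gap in your proof relative to the paper's.
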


\begin{figure}[t]
  \centering
  \includegraphics[width=1.0\linewidth]{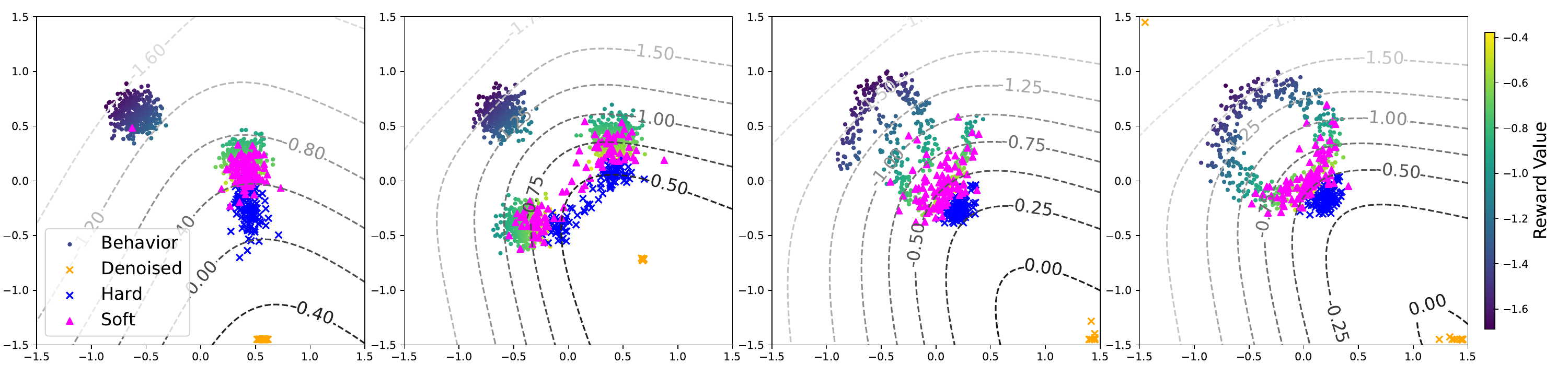}
  \caption{Comparison of generated policies on 2-dimensional bandit using different Q-gradient guidance. We compare soft Q-guidance (\textcolor{magenta}{magenta}) against hard Q-guidance (\textcolor{blue}{blue}) that eliminates the noise scaling factor and denoised Q-guidance \citep{wang2022diffusion} (\textcolor{brown}{brown}) on 2-D bandit examples. The dots are behavior policies, which are colored based on the reward value. The dashed level curves represent the estimated Q-value field. Soft Q-guidance are capable of generating high-reward actions while remaining within the behavior support. We also observe that soft Q-guidance captures the multi-modality of target policies as shown in the second plot. Experimental details can be found in Appendix \ref{app_2d_eg}.}
  \label{fig:soft_q_guidance}
\end{figure}

Although $\bfe^*(\bfx_t, \bfs, t)$ determines the diffusion model that directly generates the target policy, the form of the target noise $\bfe^*(\bfx_t, \bfs, t)$ necessitates the estimation of the noisy score function of the behavior policy $\nabla_{\bfx_t} \log p_t (\bfx_t|\bfs)$ by (\ref{equ:esm}), which is typically not accessible. To tackle this problem, we investigate the learning objective when utilizing function approximators. Specifically, we project the target noise $\bfe^*(\bfx_t, \bfs, t)$ onto a parameterized conditional noise model $\bfe_\theta (\bfx_t, \bfs, t)$ via $L^2$-loss, following the standard training objective of diffusion models:
\begin{equation}\label{equ:dsm_loss}
    \arg\min_\theta \mathbb{E}_{\bfs \sim \mathcal{D}, \bfa \sim \pi^*, \bfx_t \sim q_t(\bfx_t|\bfa),t} ||\bfe_\theta(\bfx_t, \bfs, t) - \bfe^*(\bfx_t, \bfs, t)||^2.
\end{equation}
To eliminate the need for sampling from the unknown target policy $\pi^*_{k+1}$, we approximate the expectation through $\bfa \sim \pi^*_{k+1}$ by the behavior data $\bfa \sim \mathcal{D}$, resulting in the surrogate objective:
\begin{equation}\label{equ:surrogate_loss}
    \arg\min_\theta \mathbb{E}_{(\bfs,\bfa) \sim \mathcal{D},\bfx_t \sim q_t(\bfx_t|\bfa),t} ||\bfe_\theta(\bfx_t, \bfs, t) - \bfe^*(\bfx_t, \bfs, t)||^2.
\end{equation}
Such learning objective has an equivalent form which is easy to optimize.
\begin{theorem}\label{thm:raw_loss}
Training parameters $\theta$ according to (\ref{equ:surrogate_loss}) is equivalent to optimize the following objective:
\begin{equation}\label{equ:noise_reg}
      \arg\min_\theta \mathbb{E}_{(\bfs, \bfa) \sim \mathcal{D}, \bfe\sim \mathcal{N}(\mathbf{0}, \mathbf{I}), t} ||\bfe_\theta(\bfx_t, \bfs, t) - \bfe + \frac{1}{\eta}\sqrt{1-\bar{\alpha}_t} \, \nabla_{\bfx_t} Q^{\pi_k}(\bfs, \bfx_t)||^2,
\end{equation}
where $\bfx_t = \sqrt{\bar{\alpha}_t}\bfa + \sqrt{1-\bar{\alpha}_t} \bfe$.
\end{theorem}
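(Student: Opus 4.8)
The plan is to expand both $L^2$ objectives, peel off the terms that do not involve $\theta$, and reduce the claimed equivalence to a single cross-term identity that is precisely denoising score matching in a state-conditional, noise-scaled form. First I substitute the target noise: combining the definition $\bfe^*(\bfx_t,\bfs,t) = -\sqrt{1-\bar{\alpha}_t}\,\nabla_{\bfx_t}\log p_t^*(\bfx_t|\bfs)$ with (\ref{equ:esm}) gives $\bfe^* = -B - G$, where I write $G := \frac{1}{\eta}\sqrt{1-\bar{\alpha}_t}\,\nabla_{\bfx_t}Q^{\pi_k}(\bfs,\bfx_t)$ for the (non-$\theta$) Q-guidance term and $B := \sqrt{1-\bar{\alpha}_t}\,\nabla_{\bfx_t}\log p_t(\bfx_t|\bfs)$ for the behavior-score term. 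Setting $c := \bfe_\theta(\bfx_t,\bfs,t) + G$, the surrogate loss (\ref{equ:surrogate_loss}) becomes $\mathbb{E}\|c + B\|^2$ while the claimed loss (\ref{equ:noise_reg}) becomes $\mathbb{E}\|c - \bfe\|^2$; crucially, sampling $\bfx_t \sim q_t(\cdot|\bfa)$ is realized by $\bfx_t = \sqrt{\bar{\alpha}_t}\bfa + \sqrt{1-\bar{\alpha}_t}\,\bfe$ with $\bfe\sim\mathcal{N}(\mathbf{0},\mathbf{I})$, so both expectations run over the identical joint law of $(\bfs,\bfa,\bfe)$. Expanding the two squares, each loss equals $\mathbb{E}\|c\|^2$ plus a cross term plus a $\theta$-independent piece ($\mathbb{E}\|B\|^2$ and $\mathbb{E}\|\bfe\|^2$ respectively), so the whole theorem reduces to showing $\mathbb{E}[\langle c, B\rangle] = -\mathbb{E}[\langle c, \bfe\rangle]$.

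The crux is this cross-term identity. For each fixed $\bfs$ and $t$ I would invoke Vincent's denoising score matching argument: using $p_t(\bfx_t|\bfs)\,\nabla_{\bfx_t}\log p_t(\bfx_t|\bfs) = \nabla_{\bfx_t}p_t(\bfx_t|\bfs)$ and differentiating under the integral in $p_t(\bfx_t|\bfs) = \int q_t(\bfx_t|\bfa)\pi_\beta(\bfa|\bfs)\,d\bfa$, the quantity $\mathbb{E}_{\bfx_t\sim p_t(\cdot|\bfs)}[\langle c, \nabla_{\bfx_t}\log p_t\rangle]$ rewrites as $\mathbb{E}_{\bfa\sim\pi_\beta(\cdot|\bfs),\,\bfx_t\sim q_t(\cdot|\bfa)}[\langle c, \nabla_{\bfx_t}\log q_t(\bfx_t|\bfa)\rangle]$, and averaging over $\bfs$ replaces $\pi_\beta$ by sampling $(\bfs,\bfa)\sim\mathcal{D}$. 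Finally the Gaussian form (\ref{equ:qt}) gives the explicit conditional score $\nabla_{\bfx_t}\log q_t(\bfx_t|\bfa) = -(\bfx_t - \sqrt{\bar{\alpha}_t}\bfa)/(1-\bar{\alpha}_t)$, which by the reparameterization (\ref{equ:add_noise}) equals $-\bfe/\sqrt{1-\bar{\alpha}_t}$; multiplying by the scale $\sqrt{1-\bar{\alpha}_t}$ carried in $B$ turns the right-hand side into exactly $-\bfe$, establishing $\mathbb{E}[\langle c, B\rangle] = -\mathbb{E}[\langle c, \bfe\rangle]$. Hence (\ref{equ:surrogate_loss}) and (\ref{equ:noise_reg}) differ only by the constant $\mathbb{E}\|B\|^2 - \mathbb{E}\|\bfe\|^2$, so their minimizers over $\theta$ coincide.

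I expect the main obstacle to be making the denoising score matching step rigorous: it requires interchanging $\nabla_{\bfx_t}$ with the integral over $\bfa$ and the vanishing of boundary terms, which in turn need mild regularity and integrability of $p_t(\cdot|\bfs)$ and of $c$. These hold because $q_t$ is Gaussian and the policy and value functions were smoothly extended to $\mathbb{R}^d$ (as assumed just before (\ref{equ:esm})), but the hypotheses should be recorded explicitly. A secondary check is that the discarded constants $\mathbb{E}\|B\|^2$ and $\mathbb{E}\|\bfe\|^2$ are finite and independent of $\theta$, which follows once the behavior score and the added noise are square-integrable under the sampling law.
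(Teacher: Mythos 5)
Your proof is correct and follows essentially the same route as the paper's: both expand the squared losses, discard the $\theta$-independent terms, and reduce the claim to the denoising score matching cross-term identity (Vincent's interchange of $\nabla_{\bfx_t}$ with the integral over $\bfa$, plus the Gaussian conditional score $\nabla_{\bfx_t}\log q_t(\bfx_t|\bfa) = -\bfe/\sqrt{1-\bar{\alpha}_t}$, which is the paper's Lemma~\ref{lemma:dsm}); your $c = \bfe_\theta + G$ bookkeeping is just a streamlined repackaging of the paper's expand-then-recomplete-the-square computation. One small point in your favor: you carry the $\sqrt{1-\bar{\alpha}_t}$ factor on the Q-guidance term consistently throughout, matching the theorem statement, whereas the paper's appendix drops it in intermediate lines.
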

The learning objective (\ref{equ:noise_reg}) defines a noise regression problem that approximates the solution of the KL constraint policy iteration (\ref{equ:opt_problem_surrogate}) within the diffusion model framework, without requiring the estimation of densities for either the behavior policy or the target policy. We refer the last term in the noise target as the \textit{soft Q diffusion guidance} or simply \textit{soft Q-guidance}. Within the soft Q-guidance, the Q-gradient is weighted by the noise scale $\sqrt{1-\bar{\alpha}_t}$. In a typical diffusion model, the noise scale $\sqrt{1-\bar{\alpha}_t} \to 0$ as  $t \to 0$ during the denoising process. This suggests that soft Q-guidance encourages the exploration of high-reward regions in the initial steps of the denoising process, and then gradually fades the guidance strength as the denoising step approaches the final output. In comparison to the hard Q-guidance that eliminates the noise scaling factor or guidance on denoised actions (see Figure \ref{fig:visual} for illustrations), soft Q-guidance produces high-fidelity actions that closely resemble the behavior policies, thereby preventing the sampling of out-of-distribution (OOD) actions (Figure \ref{fig:soft_q_guidance}).

To connect the learning objective (\ref{equ:noise_reg}) with policy-regularized methods, we rearrange the terms in (\ref{equ:noise_reg}) and incorporate constant(s) into $\eta$, resulting in the following actor learning loss:
\begin{equation}\label{equ:policy_imp}
  \mathcal{L}_A(\theta) =  \mathbb{E}_{(\bfs, \bfa) \sim \mathcal{D}, \bfe, t} \big [\eta ||\bfe_\theta(\bfx_t, \bfs, t) - \bfe||^2 + \sqrt{1-\bar{\alpha}_t}\,\bfe_\theta(\bfx_t, \bfs, t)\cdot \nabla_{\bfx_t} Q^{\pi_k}(\bfs, \bfx_t)\big],
\end{equation}
where the dot ($\cdot$) implies inner product. The Lagrangian multiplier $\eta$ determines the trade-off between the behavior cloning and the policy improvement. As $\eta \to \infty$, the noise prediction loss (\ref{equ:policy_imp}) reduces to behavior cloning using a parameterized conditional diffusion model, as used in the recent research \citep{chen2022offline, hansen2023idql, wang2022diffusion}. The second term involves a inner product between the predicted noise and the Q-gradient, promoting the acquired denoising directions to align with the estimated Q-gradient field. In practical implementation, DAC balances the behavior cloning and policy improvement by controlling $\eta$ to be either fixed or learnable through dual gradient ascent. If $\eta$ is learnable, DAC trains $\eta$ to ensure that $||\bfe_\theta(\bfx_t, \bfs, t) - \bfe||^2 \leq b$ for a given threshold value $b>0$.

\begin{algorithm}[t]
\caption{Diffusion Actor-Critic Training}\label{alg:dac}
\begin{algorithmic}[1]
\Require offline dataset $\mathcal{D}$, batch size $B$, learning rates $\alpha_\phi$, $\alpha_\theta$, $\alpha_\eta$ and $\alpha_\text{ema}$, behavior cloning threshold $b$, pessimism factor $\rho$, initial Lagrangian multiplier $\eta_\text{init}$, ensemble size $H$
\State Initialize: diffusion policy $\bfe_\theta$, target diffusion policy $\bfe_{\bar{\theta}}=\bfe_\theta$, Q-networks $Q_{\phi^h}$, target Q-networks $Q_{\bar{\phi}^h}=Q_{\phi^h}$ ($h=1,2,...,H$), Lagrangian multiplier $\eta = \eta_\text{init}$
\While{training not convergent}
\State Sample a batch of $B$ transitions $\{(\bfs, \bfa, r, \bfs')\} \subset \mathcal{D}$
\State Sample $\bfa'=\bfx_0$ through denoising process using noise predictor $\bfe_{\bar{\theta}}(\bfx_t, t, \bfs)$.
\For{$h$ in $\{1,2,...,H\}$}
\State Update $\phi^h \gets \phi^h - \alpha_\phi \nabla_{\phi^h} \mathcal{L}_C(\phi^h)$ (\ref{equ:policy_eval}) \Comment{Critic learning}
\EndFor
\State Sample $\bfe \sim \mathcal{N}(\mathbf{0}, \mathbf{I}), t \sim \text{Unif}(0,T)$ and compute $\bfx_t = \sqrt{\bar{\alpha}_t}\bfa + \sqrt{1-\bar{\alpha}_t} \bfe$
\State Estimate Q-gradient $\nabla_{\bfx_t} Q^{\pi_k}(\bfs, \bfx_t)$ using (\ref{equ:Q_ensemble})
\State $\theta \gets \theta - \alpha_\theta \nabla_{\theta} \mathcal{L}_A(\theta)$ (\ref{equ:policy_imp})  \Comment{Actor learning}
\State $\eta \gets \eta + \alpha_\eta(||\bfe_\theta(\bfx_t, \bfs, t) - \bfe||^2 - b)$  \Comment{Dual gradient ascent (optional)}
\State $\bar{\theta} \gets(1-\alpha_\text{ema})\bar{\theta} + \alpha_\text{ema}\theta$
\State $\bar{\phi}^h \gets(1-\alpha_\text{ema})\bar{\phi}^h + \alpha_\text{ema}\phi^h$ \Comment{Update target networks using EMA}
\EndWhile
\end{algorithmic}
\end{algorithm}

During critic learning, we approximate $Q^{\pi_k}$ with neural networks. To enhance the stability of estimating the Q-gradient used in (\ref{equ:policy_imp}), we follow the method of using pessimistic Q-ensembles \citep{wu2019behavior, agarwal2020optimistic, smit2021pebl, lee2021sunrise, lee2022offline, an2021uncertainty}. Specifically, we train an ensemble of $H$ parameterized Q-networks $Q_{\phi_k^h}$ and target Q-networks $Q_{\bar{\phi}_k^h}$, along with lower confidence bound (LCB) as value targets \citep{ghasemipour2022so}, which leads to the critic learning loss:
\begin{equation}\label{equ:policy_eval}
\begin{aligned}
    \mathcal{L}_C(\phi^h)= &\mathbb{E}_{(\bfs, \bfa, r, \bfs')\sim \mathcal{D},\bfa' \sim \pi_{\theta_k}}\big[ r + \gamma Q_\text{LCB} (\bfs', \bfa') - Q_{\phi^h}(\bfs, \bfa) \big]^2, \\
    & Q_\text{LCB}(\bfs', \bfa') = \,\mathbb{E}_h[Q_{\bar{\phi}_k^h}(\bfs', \bfa')] - \rho \sqrt{\text{Var}_h [Q_{\bar{\phi}_k^h}(\bfs', \bfa')]},
\end{aligned}
\end{equation}
where $\rho \geq 0$ is a hyperparameter that determines the level of pessimism, and $\mathbb{E}_h[\cdot]$ and $\text{Var}_h[\cdot]$ are empirical mean and variance operators over the $H$ ensembles. Once the Q-functions are trained, the Q-gradient in the soft Q-guidance can be estimated by the ensemble's average of target Q-networks:
\begin{equation}\label{equ:Q_ensemble}
    \nabla_{\bfx_t} Q^{\pi_k}(\bfs, \bfx_t) \approx \frac{1}{HC} \sum_{h=1}^H\nabla_{\bfx_t} Q_{\bar{\phi}_k^h}(\bfs, \bfx_t),
\end{equation}
where $C=\mathbb{E}_{(\bfs,\bfa) \sim \mathcal{D}}|Q_{\bar{\phi}_k^h}(\bfs, \bfa)|$ is an estimated scaling constant that eliminates the influence of varying Q-value scales in different environments. We summarize the full algorithm of DAC for offline RL in Algorithm \ref{alg:dac}.

\subsection{Policy extraction}
We denote $\pi_\theta(\bfa|\bfs)$ as the trained diffusion policy through denoisng process using noise predictor $\epsilon_\theta(\bfx_t, \bfs, t)$. While $\pi_\theta(\bfa|\bfs)$ is capable of generating the target policy, we aim to reduce the uncertainty of the denoising process during the evaluation phase. To achieve this, we sample a small batch of $N_a$ actions and select the action with the highest Q-ensemble mean value, resulting in better performance:
\begin{equation}\label{equ:policy}
    \pi(\bfs) = {\arg\max}_{\bfa_1,...,\bfa_{N_a}\sim\pi_\theta(\cdot|\bfs)} \mathbb{E}_h[Q_{\bar{\phi}_k^h}(\bfs, \bfa)].
\end{equation}
This approach is commonly employed in methods where a stochastic actor is trained for critic learning, and a deterministic policy is implemented during evaluation \citep{brandfonbrener2021offline, haarnoja2018soft}. Since $\pi_\theta(\bfa|\bfs)$ is trained as a target policy, the sampling number $N_a$ can be relatively small. Through our experiments, we find that DAC can achieve superior performance with $N_a=10$. In comparison, SfBC \citep{chen2022offline}, Diffusion Q-learning \citep{wang2022diffusion} and  IDQL \citep{hansen2023idql} use  $N_a=32$, $N_a=50$ and $N_a=128$, respectively.

\section{Related work}

\textbf{Offline RL.} Recent research on offline RL often use value-based algorithms based on Q-learning or actor-critic learning \citep{sutton1999reinforcement}. Policy-regularization methods typically train a biased behavior sampler to help estimate the maximum Q-values within the behavior support. Among these approaches, BCQ \citep{fujimoto2019off} learns a conditional-VAE \citep{sohn2015learning} to aid in sampling Q-learning targets; BEAR \citep{kumar2019stabilizing} employs maximum mean discrepancy (MMD) to regularize the learned policy. Moreover, BRAC \citep{wu2019behavior} is based on actor-critic learning framework and explores various regularization methods as value penalties; TD3+BC \citep{fujimoto2021minimalist} adds a behavior cloning term to regularize the learned policy in a supervised manner. Additionally, some methods implicitly regularize the policy by training pessimistic value-functions or using in-sample estimation. CQL \citep{kumar2020conservative} learns conservative Q-values on OOD actions. IQL \citep{kostrikov2021offline} and IDQL \citep{hansen2023idql} use asymmetric loss functions to approximate the maximal Q-value target via in-sample data. IVR \citep{xu2023offline} also employs in-sample learning while within the framework of behavior-regularized MDP problem, resulting in two implicit Q-learning objectives. Extreme Q-learning \citep{garg2023extreme} estimates maximal Q-values using Gumbel regression. In addition to regularizing target policies to align with behavior policies, recent studies also emphasize the importance of regulating the steps of policy updates within the trust region \citep{schulman2015trust, schulman2017proximal} to ensure policy improvements \citep{zhuang2023behavior, zhang2024implicit, lei2023uni}. Our method conducts policy-regularization by focusing on KL-regularized policy iteration \citep{schulman2015trust, peng2019advantage, nair2020awac}, which regularizes the policy improvement step to fulfill the KL-divergence constraint, preventing the bootstrapping error of estimating Bellman targets.

\textbf{Diffusion models for offline RL.} Recent studies that utilize diffusion models for offline RL can be broadly categorized into two types: those that model entire trajectories and those that generate behavior policies. Diffuser \citep{janner2022planning} trains a diffusion model as a trajectory planner. The policies are generated through guided-sampling with return prompts, similar to methods that modeling trajectories using Transformer \citep{chen2021decision, janner2021offline}. Diffusion Q-learning \citep{wang2022diffusion} employs a diffusion model as a biased behavior sampler, incorporating an additional loss to promote the denoised actions to achieve maximal Q-values. IDQL \citep{hansen2023idql} and SfBC \citep{chen2022offline} use diffusion models to generate behavior policies. The target policies are extracted by re-sampling from diffusion-generated actions. Diffusion trusted Q-learning \citep{chen2024diffusion} also models behavior policies as diffusion models, while the diffusion-modeled behavior samplers are used to extract Gaussian policies that focus on high-reward modes. In comparison, DAC directly utilizes trained diffusion model as the target policy in the actor-critic paradigm, rather than using it as a behavior sampler to estimate Q-learning targets.

\textbf{Diffusion models for online RL.} Recent research also investigates diffusion policies for online RL. QSM \citep{psenka2023learning} learns diffusion policies by directly aligning with the Boltzmann distribution of Q-functions. DIPO \citep{yang2023policy} updates diffusion policies by adjusting action samples in the replay buffer. To support online exploration, \citep{wang2024diffusion} employs Gaussian mixture models to estimate densities and entropy. Although these methods also learn diffusion target policy, they lack policy-regularization mechanisms to tackle the unique challenges of offline RL.

\section{Experiments}
In this section, we empirically demonstrate the effectiveness of our proposed method by comparing it with a comprehensive set of recent approaches on D4RL benchmarks \citep{fu2020d4rl}. We also conduct ablations to show the efficacy of soft Q-guidance in achieving stable convergence and strong performance. Additionally, we include sensitivity analysis of the Q-ensemble size and action sample size to demonstrate its robustness. Due to space limitations, we place experimental details and ablation studies on LCB and other hyperparameters in Appendix \ref{apx:exp_detail} and \ref{apx:add_exp}.

\begin{table}[t]
\centering
\scriptsize
\setlength\tabcolsep{3.2pt}
\renewcommand{\arraystretch}{1.2}
\caption{{\bf Average normalized scores of DAC compared to other baselines.} We use the following abbreviations: ``m'' for ``medium'';  ``r'' for ``replay'';  ``e'' for ``expert''; ``u'' for ``umaze'';  ``div'' for ``diverse'' and  ``l'' for ``large''. For locomotion tasks, we use ``v-2'' version; for antmaze tasks, we use ``v-0'' version. We highlight in boldface the numbers within 5\% of the maximal scores in each task. Furthermore, we also underline the highest scores achieved by prior methods.}
\label{tab:main_res}
\begin{tabular}{lccccccccccc|c}
\hline
\textbf{Dataset} & \textbf{Onestep-RL} & \textbf{CQL} & \textbf{IQL} & \textbf{IVR} & \textbf{EQL} & \textbf{Diffuser} & \textbf{DTQL}  & \textbf{AlignIQL} & \textbf{SfBC} & \textbf{DQL} & \textbf{IDQL-A} & \textbf{DAC (ours)} \\ \hline
halfcheetah-m & 48.4 & 44.0 & 47.4 & 48.3 & 48.3 & 44.2 & \underline{\textbf{57.9}} & 46.0 & 45.9 & 51.1 & 51.0 & \textbf{59.1} $\pm$ 0.4 \\
hopper-m & 59.6 & 58.5 & 66.3 & 75.5 & 74.2 & 58.5 & \underline{\textbf{99.6}} & 56.1 & 57.1 & 90.5 & 65.4 & \textbf{101.2} $\pm$ 2.0 \\
walker2d-m & 81.8 & 72.5 & 78.3 & 84.2 & 84.2 & 79.7 & \underline{89.4} & 78.5 & 77.9 & 87.0 & 82.5 & \textbf{96.8} $\pm$ 3.6 \\
halfcheetah-m-r & 38.1 & 45.5 & 44.2 & 44.8 & 45.2 & 42.2 & \underline{50.9} & 41.1 & 37.1 & 47.8 & 45.9 & \textbf{55.0} $\pm$ 0.2 \\
hopper-m-r & 97.5 & 95.0 & 94.7 & \textbf{99.7} & \textbf{100.7} & 96.8 & \textbf{100.0} & 74.8 & 86.2 & \underline{\textbf{101.3}} & 92.1 & \textbf{103.1} $\pm$ 0.3 \\
walker2d-m-r & 49.5 & 77.2 & 73.9 & 81.2 & 82.2 & 61.2 & 88.5 & 76.5 & 65.1 & \underline{\textbf{95.5}} & 85.1 & \textbf{96.8} $\pm$ 1.0 \\
halfcheetah-m-e & 93.4 & 91.6 & 86.7 & 94.0 & \textbf{94.2} & 79.8 & 92.7 & 89.1 & 92.6 & \underline{\textbf{96.8}} & \textbf{95.9} & \textbf{99.1} $\pm$ 0.9 \\
hopper-m-e & 103.3 & 105.4 & 91.5 & \underline{\textbf{111.8}} & \textbf{111.2} & \textbf{107.2} & \textbf{109.3} & \textbf{107.1} & \textbf{108.6} & \textbf{111.1} & \textbf{108.6} & \textbf{111.7} $\pm$ 1.0 \\
walker2d-m-e & \underline{\textbf{113.0}} & \textbf{108.8} & \textbf{109.6} & \textbf{110.2} &\textbf{112.7} & \textbf{108.4} & \textbf{110.0} & \textbf{111.9} & \textbf{109.8} & \textbf{110.1} & \textbf{112.7} & \textbf{113.6} $\pm$ 3.5 \\ \hline
\textbf{locomotion total} & 684.6 & 698.5 & 749.7 & 749.7 & 752.9 & 678.0 & \underline{\bf 798.3} & 681.1 & 680.3 & 791.2 & 739.2 & \textbf{836.4} \\ \hline \hline
antmaze-u & 64.3 & 74.0 & 87.5 & 93.2 & 93.8 & - & \underline{\bf 94.8} & \underline{\bf 94.8} & 92.0 & 93.4 & 94.0 & \textbf{99.5} $\pm$ 0.9 \\
antmaze-u-div & 60.7 & {\bf 84.0} & 62.2 & 74.0 & {\bf 82.0} & - & 78.8 & 82.4 & \underline{{\bf 85.3}} & 66.2 & 80.2 & \textbf{85.0} $\pm$ 7.9 \\
antmaze-m-play & 0.3 & 61.2 & 71.2 & 80.2 & 76.0 & - & 79.6 & 80.5 & 81.3 & 76.6 & \underline{{\bf 84.2}} & \textbf{85.8} $\pm$ 5.5 \\
antmaze-m-div & 0.0 & 53.7 & 70.0 & 79.1 & 73.6 & - & 82.2 & \underline{\bf 85.5} & {\bf 82.0} & 78.6 & {\bf 84.8} & 
\textbf{84.0}  $\pm$ 6.2\\
antmaze-l-play & 0.0 & 15.8 & 39.6 & 53.2 & 46.5 & - & 52.0 & \underline{\bf 65.2} & 59.3 & 46.4 & {\bf 63.5} & 50.3 $\pm$ 8.6 \\
antmaze-l-div & 0.0 & 14.9 & 47.5 & 52.3 & 49.0 & - & {\bf 66.4} & 54.0 & 45.5 & 56.6 & \underline{{\bf 67.9}} & 55.3 $\pm$ 10.3 \\ \hline
\textbf{antmaze total} & 125.3 & 303.6 & 378.0 & 432.0 & 420.9 & - & 441.4 & \underline{{\bf 474.8}} & 445.4 & 417.8 & {\bf 474.6} & {\bf 459.9} \\ \hline
\end{tabular}
\end{table}

\subsection{Comparisons on offline RL benchmarks}
We compare our approach against an extensive collection of baselines that solve offline RL using various methods. For explicit policy-regularization methods, we consider one-step RL (Onestep-RL) \citep{brandfonbrener2021offline}, which conducts a single step actor-critic learning. For value-constraint methods, we compare against Conservative Q-learning (CQL) \citep{kumar2020conservative}. For in-sample estimation of maximal value targets, we include Implicit Q-learning (IQL) \citep{kostrikov2021offline}, Implicit Value Regularization (IVR) \citep{xu2023offline} and Extreme Q-learning (EQL) \citep{garg2023extreme}. Additionally, we compare our approach to methods that also learn diffusion policies. In this category, we compare against the most recent works including Diffuser \citep{janner2022planning}, SfBC \citep{chen2022offline}, Diffusion Q-learning (DQL) \citep{wang2022diffusion}, Diffusion Trusted Q-Learning (DTQL) \citep{chen2024diffusion}, AlignIQL \citep{he2024aligniql} and IDQL \citep{hansen2023idql}. Specifically, we present the results of ``IDQL-A'' variant \citep{hansen2023idql} for IDQL, which permits tuning of any amount of hyperparameters and exhibits strong performance. For the baselines, we report the best results from their own paper or tables in the recent papers \citep{wang2022diffusion, chen2022offline, hansen2023idql}. The main results are shown in Table \ref{tab:main_res}. As for additional results on Adroit and Kitchen tasks, we refer readers to Appendix \ref{apx:pen_kitchen}.

Drawing from the experimental results, DAC outperforms prior methods by a significant margin across nearly all tasks. DAC significantly enhances the overall score on locomotion tasks, with an average increase of 5\% compared to the best performance in prior studies. Notably, for ``medium'' tasks, where the dataset contains numerous sub-optimal trajectories, DAC consistently achieves improvements of over 10\%. The antmaze domain presents a greater challenge due to the sparsity of rewards and the prevalence of sub-optimal trajectories. Consequently, algorithms must possess strong capabilities in stitching together sub-optimal subsequences to achieve high scores \citep{janner2022planning}. It is evident that DAC outperforms or achieves competitive outcomes on antmaze tasks, with an almost perfect mean score ($\approx 100$) on the ``antmaze-umaze'' task. In the case of the most demanding ``large'' tasks, DAC performs comparably to previous methods, with the exception of IDQL-A, which consistently showcases superior performance. One potential explanation for this difference could be that we do not tune the rewards by subtracting a negative number, as is done in previous studies \citep{kumar2020conservative,kostrikov2021offline,wang2022diffusion,hansen2023idql}. This setting exacerbates the impact of reward sparsity in more intricate ``large'' environments, leading to slower convergence. 

It is worth mentioning that we report the performance after convergence, which imposes a stronger requirement for evaluation, as it necessitates the model training to demonstrate the capability of convergence, rather than relying on online model selection \citep{wang2022diffusion, kang2024efficient}. These requirements hold significant importance for ensuring robust deployment in real-world applications.

\subsection{Ablation study on Q-gradient guidance}\label{sec:ablation}

\begin{figure}[t]
  \centering
  \includegraphics[width=0.9\linewidth]{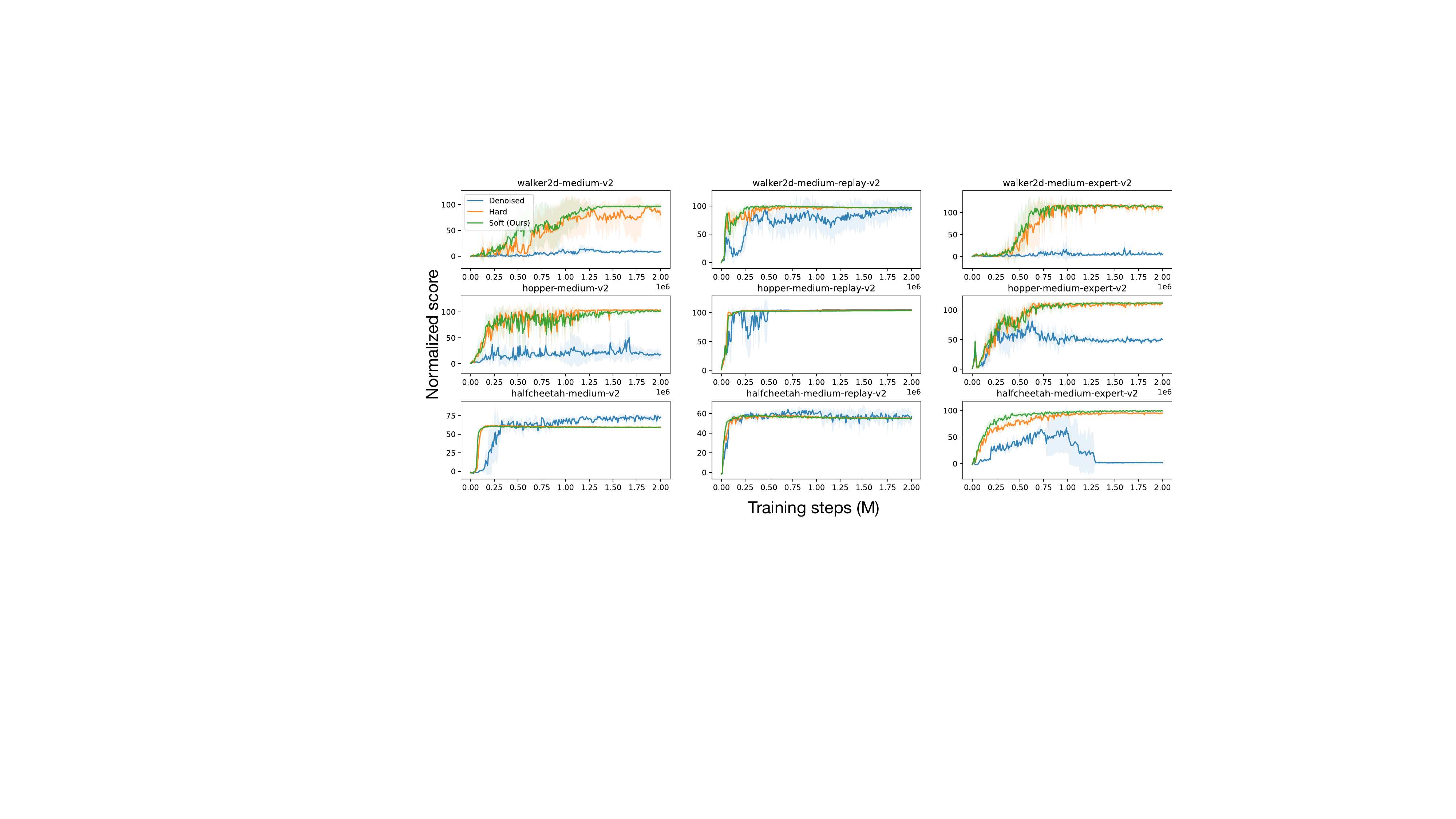}
  \caption{Training curves of DAC with different Q-gradient guidance. We compare soft Q-guidance (soft), hard Q-guidance (hard) and the denoised Q-guidance (denoised) on locomotion tasks. DAC with soft Q-guidance achieves stable convergence and strong performance across all the tasks.}\label{fig:ablation_guide}
\end{figure}

To demonstrate the effectiveness of soft Q-guidance, we compare DAC to two variants: one that utilizes hard Q-guidance by eliminating the noise scaling factor, and another that employs denoised Q-guidance by conducting guidance with denoised actions (see Appendix \ref{apx:exp_detail} for details). The training curves are presented in Figure \ref{fig:ablation_guide}, with the corresponding final normalized scores in Table \ref{tab:Q_guide_score}. DAC with soft Q-guidance achieves the highest performance in nearly all tasks. The hard Q-guidance also performs well when the behavior dataset comprises an adequate number of optimal demonstrations. However, when confronted with tasks that involve numerous sub-optimal trajectories (such as ``medium'' datasets), the hard Q-guidance falls behind in comparison to the soft Q-guidance. Furthermore, the denoised Q-guidance often struggles to generate in-distribution actions and frequently fails. Nevertheless, it yields the highest score on ``halfcheetah-m'' task, which could be attributed to the fact that such task is more tolerant to OOD actions.

\begin{table}[t]
\scriptsize
\setlength\tabcolsep{3.2pt}
\renewcommand{\arraystretch}{1.2}
\centering
\caption{{\bf Q-guidance ablation.} We compare soft Q-guidance against hard Q-guidance and denoised Q-guidance while maintaining the remaining settings the same, with the average normalized scores surpassing prior methods (Table \ref{tab:main_res}) highlighted in boldface.}\label{tab:Q_guide_score}
\begin{tabular}{l|ccc|ccc|ccc}
\hline
\multirow{2}{*}{\textbf{Q-Target}} & \multicolumn{3}{c}{\textbf{walker2d}} & \multicolumn{3}{c}{\textbf{hopper}} & \multicolumn{3}{c}{\textbf{halfcheetah}} \\  \cline{2-10}
       & \textbf{m} & \textbf{m-r} & \textbf{m-e} & \textbf{m} & \textbf{m-r} & \textbf{m-e} & \textbf{m} & \textbf{m-r} & \textbf{m-e} \\
\hline
Denoised    & 8.4 $\pm$ 2.2  &  95.4 $\pm$ 7.5  &  5.9 $\pm$ 1.0  & 17.8 $\pm$ 8.2  &  {\bf 105} $\pm$ 1.0  & 49.5 $\pm$ 1.4 &  {\bf 71.9} $\pm$ 1.9 &   {\bf 56.7} $\pm$ 5.3 &  1.76 $\pm$ 1.0   \\
Hard    & 85.2 $\pm$ 16.1  &  {\bf 96.9} $\pm$ 0.5  &  110.4 $\pm$ 6.3  & {\bf 103.1} $\pm$ 0.2  & {\bf 103.8} $\pm$ 0.3  &  110.2 $\pm$ 2.4   & {\bf 59.5} $\pm$ 0.5  &  {\bf 55.3} $\pm$ 0.4  &  94.6 $\pm$ 0.9   \\
Soft (Ours)   & \textbf{96.8} $\pm$ 3.6 & \textbf{96.8} $\pm$ 1.0 & \textbf{113.6} $\pm$ 3.5 & \textbf{101.2} $\pm$ 2.0 & \textbf{113.1} $\pm$ 0.3 & \textbf{111.7} $\pm$ 1.0 & \textbf{59.1} $\pm$ 0.4 & \textbf{55.0} $\pm$ 0.2 & \textbf{99.1} $\pm$ 0.9 \\
\hline
\end{tabular}
\end{table}

\subsection{Sensitivity analysis}

\begin{wrapfigure}{r}{0.55\textwidth}
  \centering
  \includegraphics[width=1\linewidth]{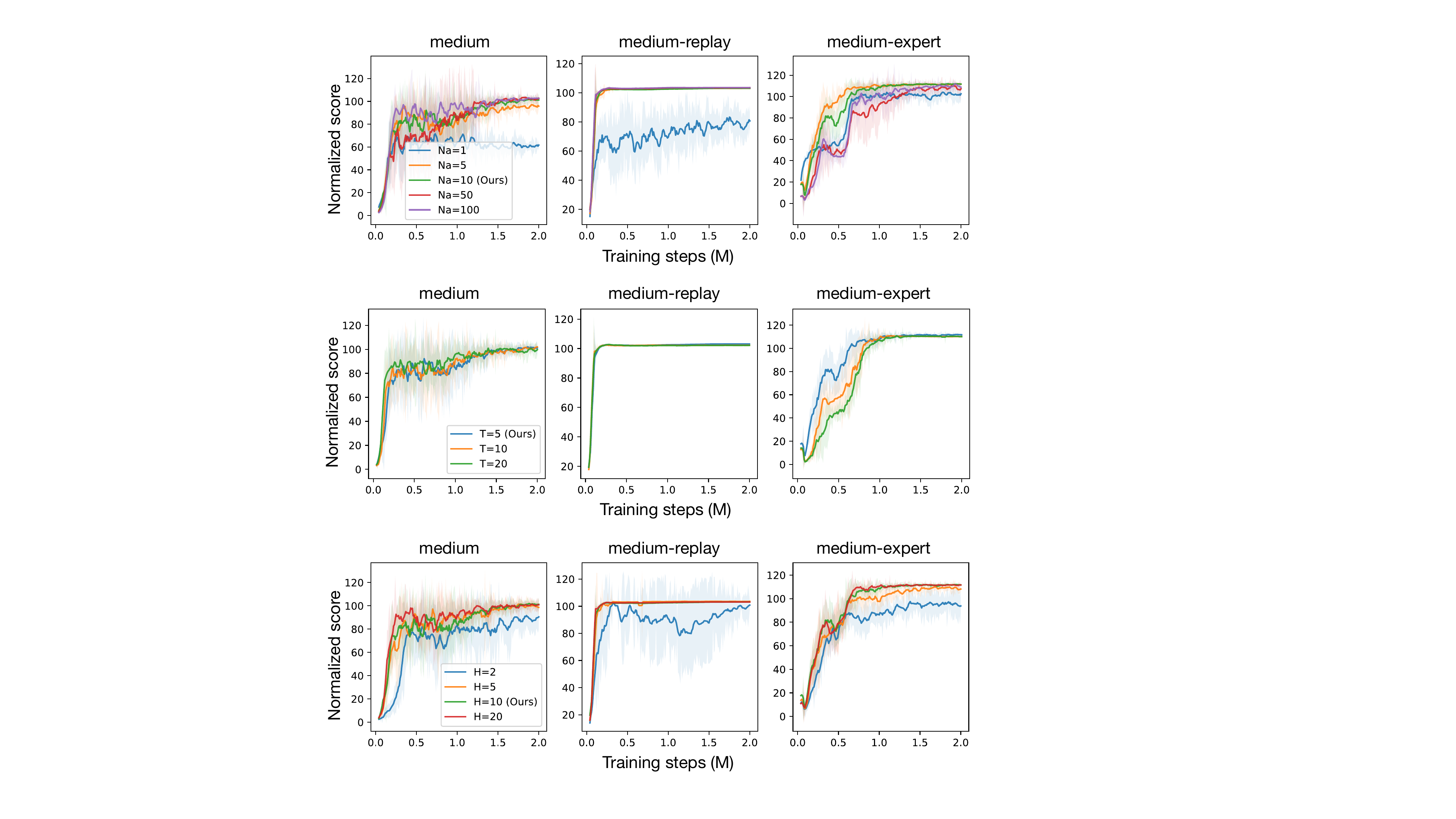}
  \vspace*{-5mm}
  \caption{Experiments of using different Q-ensemble sizes $(H)$ on hopper tasks.}\label{fig:ab_H}
\end{wrapfigure}

\textbf{Q-ensemble sizes.} We compare the performance of different Q-ensemble sizes in Figure \ref{fig:ab_H}. We find that ensemble sizes $ \geq 5$ generally yield similar results, while using a number as few as two can reduce performance on certain tasks. This highlights the importance of accurately estimating the Q-gradient field in DAC for effective learning. Larger ensembles help smooth the NN-estimated Q-gradient fields, leading to more stable performance. We choose an ensemble size of 10 for all tasks to balance performance and computational cost. 

\begin{wrapfigure}{r}{0.55\textwidth}
  \centering
  \includegraphics[width=1\linewidth]{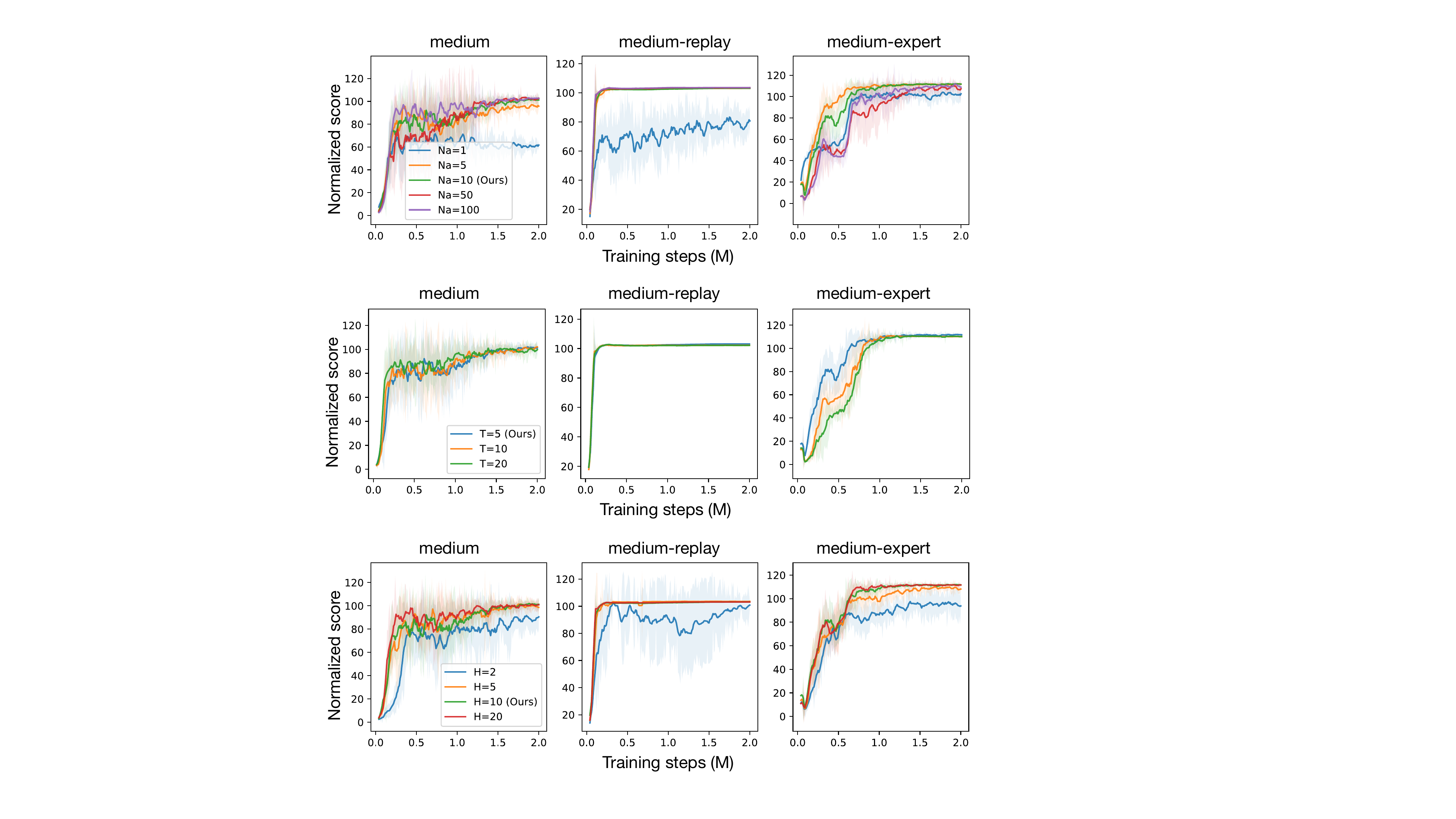}
  \vspace*{-5mm}
  \caption{Experiments of using different number of action samples $(N_a)$ on hopper tasks.}
  \label{fig:ab_Na}
\end{wrapfigure}
\textbf{Number of action samples.} Compared with methods that use diffusion models as behavior samplers, DAC directly models the target policy as a diffusion model, permitting fewer action samples to achieve good results. To demonstrate that DAC is robust to the choices of $N_a$, we conduct experiments with various $N_a$ on hopper tasks, as shown in Figure \ref{fig:ab_Na}. We observe that increasing $N_a > 10$ gives similar results, while a small $N_a$ is detrimental to the performance by the randomness of denoising process of diffusion models. 

\section{Conclusion}
In this paper, we propose the Diffusion Actor-Critic framework, which theoretically formulates the KL constraint policy iteration as a diffusion noise regression problem. The resulting policy improvement loss includes a soft Q-guidance term that adjusts the strength of Q-gradient guidance based on noise scales. This approach encourages the generation of high-reward actions while remaining within the behavior support. Furthermore, DAC avoids gradient propagation through the denoising path, significantly reducing training time for diffusion policies. Experiments demonstrate that our algorithm achieves stable convergence and superior performance on D4RL benchmarks, outperforming previous methods across nearly all tasks without the need for online model selection.



\bibliography{iclr2025_conference}
\bibliographystyle{iclr2025_conference}

\newpage
\appendix

\section{Proofs of Theoretical Results}\label{apx:proof}

\begin{lemma}\label{lemma:dsm}
    Let $\bfe \sim \mathcal{N}(\mathbf{0}, \mathbf{I})$ be a standard Gaussian noise, and $\bfx_t = \sqrt{\bar{\alpha}_t}\bfa + \sqrt{1-\bar{\alpha}_t} \bfe$ be the noise perturbation of the action $\bfa$ defined in (\ref{equ:qt}). Then the denoising score function $\nabla_{\bfx_t} \log q_t(\bfx_t|\bfa)$ maintains the property:
    \begin{equation}\label{equ:score_rela}
          \nabla_{\bfx_t} \log q_t(\bfx_t|\bfa) =-\frac{\bfe}{\sqrt{1-\bar{\alpha}_t}},
    \end{equation}
    
\end{lemma}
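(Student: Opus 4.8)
The plan is to compute the score $\nabla_{\bfx_t} \log q_t(\bfx_t|\bfa)$ directly from the explicit Gaussian form of the perturbation kernel, and then rewrite the result in terms of $\bfe$ using the reparameterization that defines $\bfx_t$. The only ingredient I need is the marginal law of the forward process recorded in (\ref{equ:qt}), together with the standard expression for the gradient of a Gaussian log-density.

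\textbf{Step 1: write out the log-density.} By (\ref{equ:qt}), the conditional law of the perturbed action is $q_t(\bfx_t|\bfa) = \mathcal{N}(\bfx_t; \sqrt{\bar{\alpha}_t}\bfa, (1-\bar{\alpha}_t)\mathbf{I})$, an isotropic Gaussian with mean $\sqrt{\bar{\alpha}_t}\bfa$ and covariance $(1-\bar{\alpha}_t)\mathbf{I}$. Taking logarithms, all the dependence on $\bfx_t$ sits in the quadratic exponent, so that
\begin{equation}
\log q_t(\bfx_t|\bfa) = -\frac{1}{2(1-\bar{\alpha}_t)}\,\|\bfx_t - \sqrt{\bar{\alpha}_t}\bfa\|^2 + c,
\end{equation}
where $c$ collects the normalizing constant of the Gaussian, which is independent of $\bfx_t$.

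\textbf{Step 2: differentiate and substitute.} Differentiating the quadratic with respect to $\bfx_t$ gives
\begin{equation}
\nabla_{\bfx_t}\log q_t(\bfx_t|\bfa) = -\frac{1}{1-\bar{\alpha}_t}\big(\bfx_t - \sqrt{\bar{\alpha}_t}\bfa\big).
\end{equation}
It then remains to recognize the residual $\bfx_t - \sqrt{\bar{\alpha}_t}\bfa$ via the defining relation $\bfx_t = \sqrt{\bar{\alpha}_t}\bfa + \sqrt{1-\bar{\alpha}_t}\,\bfe$ from (\ref{equ:add_noise}), which yields $\bfx_t - \sqrt{\bar{\alpha}_t}\bfa = \sqrt{1-\bar{\alpha}_t}\,\bfe$. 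Substituting this in and cancelling one factor of $\sqrt{1-\bar{\alpha}_t}$ produces the claimed identity $\nabla_{\bfx_t}\log q_t(\bfx_t|\bfa) = -\bfe/\sqrt{1-\bar{\alpha}_t}$.

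\textbf{Main obstacle.} There is essentially no hard step here: the statement is a one-line computation once the Gaussian form (\ref{equ:qt}) is in hand. The only point requiring care is conceptual rather than technical — one must keep track of which quantity is held fixed when differentiating (the action $\bfa$, hence the mean of $q_t(\cdot|\bfa)$) and then close the loop by using the reparameterization (\ref{equ:add_noise}) to trade the residual for the sampled noise $\bfe$. This lemma is the key bridge that will later let the score-based target $\bfe^*$ in Theorem \ref{thm:diffusion} and the regression objective of Theorem \ref{thm:raw_loss} be re-expressed as a plain noise-prediction loss.
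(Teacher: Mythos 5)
Your proof is correct and follows exactly the paper's own argument: write the Gaussian form of $q_t(\bfx_t|\bfa)$ from (\ref{equ:qt}), differentiate the log-density to get $-(\bfx_t - \sqrt{\bar{\alpha}_t}\bfa)/(1-\bar{\alpha}_t)$, and substitute the reparameterization (\ref{equ:add_noise}) to replace the residual with $\sqrt{1-\bar{\alpha}_t}\,\bfe$. No gaps; your added remark about holding $\bfa$ fixed during differentiation is a reasonable clarification the paper leaves implicit.
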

\begin{proof}
    Since $q_t(\bfx_t|\bfx_0) = \mathcal{N}(\bfx_t; \sqrt{\bar{\alpha}_t}\bfx_0, (1-\bar{\alpha}_t) \mathbf{I})$, the density of noise distribution has the closed form:
    \begin{equation}
    q_t(\bfx_t|\bfa) \propto \exp{ [- \frac{(\bfx_t - \sqrt{\Bar{\alpha}_t}\bfa)^2}{2(1-\bar{\alpha}_t)}]}.
     \end{equation}
     Therefore, we have
    \begin{equation}
    \begin{aligned}
          \nabla_{\bfx_t} \log q_t(\bfx_t|\bfa) &= - \frac{\bfx_t - \sqrt{\Bar{\alpha}_t}\bfa}{1-\bar{\alpha}_t} = - \frac{\sqrt{1-\bar{\alpha}_t}\bfe}{1-\bar{\alpha}_t} \\
          &= -\frac{\bfe}{\sqrt{1-\bar{\alpha}_t}}.
    \end{aligned}
    \end{equation}
\end{proof}

\subsection{Proof of Theorem \ref{thm:diffusion}}
\begin{proof}
    Let $\bfe \sim \mathcal{N}(\mathbf{0}, \mathbf{I})$ be a standard Gaussian noise. Consider the diffusion process $q_t(\bfx_t|\bfa)$ defined in (\ref{equ:qt}) using reparameterization trick. Then $\bfe^*$ solves the noisy score matching objective:
    \begin{equation}
        \bfe^* = \arg\min_{\tilde{\bfe}}\mathbb{E}_{\bfa\sim\pi^*_{k+1}, t, \bfx_t\sim q_t(\cdot|\bfa)}[\frac{1}{2}||\tilde{\bfe}(\bfx_t,\bfs, t) + \sqrt{1-\bar{\alpha}_t}\nabla_{\bfx_t} \log p_t^* (\bfx_t|\bfs)||^2].
    \end{equation}
    We can rewrite the objective to obtain:
    \begin{equation}
    \bfe^* = \arg\min_{\tilde{\bfe}}\mathbb{E}_{\bfa\sim\pi^*_{k+1}, t, \bfx_t\sim q_t(\cdot|\bfa),}[\frac{1}{2}||\tilde{\bfe}(\bfx_t,\bfs, t)||^2 + \tilde{\bfe}(\bfx_t,\bfs, t)\cdot\sqrt{1-\bar{\alpha}_t}\nabla_{\bfx_t} \log p_t^* (\bfx_t|\bfs)] + C_1.
    \end{equation}
    Consider the second term:
    \begin{align*}
        \mathbb{E}_{\bfx_t\sim p^*_t(\cdot|\bfs)}&[\tilde{\bfe}(\bfx_t, \bfs, t)\cdot\sqrt{1-\bar{\alpha}_t}\nabla_{\bfx_t} \log p_t^* (\bfx_t|\bfs)]\\
        &=\sqrt{1-\bar{\alpha}_t}\int_{\bfx_t}p^*_t(\bfx_t|\bfs)\tilde{\bfe}(\bfx_t, \bfs, t)\cdot\nabla_{\bfx_t} \log p_t^* (\bfx_t|\bfs)d \bfx_t\\
        &=\sqrt{1-\bar{\alpha}_t}\int_{\bfx_t}\tilde{\bfe}(\bfx_t, \bfs, t)\cdot\nabla_{\bfx_t} p_t^* (\bfx_t|\bfs)d \bfx_t\\
        &=\sqrt{1-\bar{\alpha}_t}\int_{\bfx_t}\tilde{\bfe}(\bfx_t, \bfs, t)\cdot\nabla_{\bfx_t} \int_\bfa q_t(\bfx_t|\bfa)\pi^*(\bfa|\bfs) d\bfa d \bfx_t\\
        &=\sqrt{1-\bar{\alpha}_t}\int_{\bfx_t}\tilde{\bfe}(\bfx_t, \bfs, t)\cdot\int_\bfa q_t(\bfx_t|\bfa)\nabla_{\bfx_t}\log q_t(\bfx_t|\bfa) \pi^*(\bfa|\bfs) d\bfa d \bfx_t\\
        &=\int_{\bfx_t}\int_\bfa\tilde{\bfe}(\bfx_t, \bfs, t)\cdot \sqrt{1-\bar{\alpha}_t}\nabla_{\bfx_t}\log q_t(\bfx_t|\bfa)q_t(\bfx_t|\bfa)\pi^*(\bfa|\bfs) d\bfa d \bfx_t\\
        &=\mathbb{E}_{\bfa\sim\pi^*_{k+1}, \bfx_t\sim q_t(\cdot|\bfa)}[\tilde{\bfe}(\bfx_t, \bfs, t)\cdot \sqrt{1-\bar{\alpha}_t}\nabla_{\bfx_t}\log q_t(\bfx_t|\bfa)].\\
    \end{align*}
    Thus we obtain:
    \begin{align*}
        \bfe^* &= \arg\min_{\tilde{\bfe}}\mathbb{E}_{\bfa\sim\pi^*_{k+1}, t, \bfx_t\sim q_t(\cdot|\bfa)}[\frac{1}{2}||\tilde{\bfe}(\bfx_t,\bfs, t)||^2 + \tilde{\bfe}(\bfx_t,\bfs, t)\cdot\sqrt{1-\bar{\alpha}_t}\nabla_{\bfx_t}\log q_t(\bfx_t|\bfa)] + C_1\\
        &=\arg\min_{\tilde{\bfe}}\mathbb{E}_{\bfa\sim\pi^*_{k+1}, t, \bfx_t\sim q_t(\cdot|\bfa)}[\frac{1}{2}||\tilde{\bfe}(\bfx_t,\bfs, t) + \sqrt{1-\bar{\alpha}_t}\nabla_{\bfx_t} \log q_t(\bfx_t|\bfa)||^2] +C_1-C_2\\
        &=\arg\min_{\tilde{\bfe}}\mathbb{E}_{\bfa\sim\pi^*_{k+1}, t, \bfx_t\sim q_t(\cdot|\bfa)}[\frac{1}{2}||\tilde{\bfe}(\bfx_t,\bfs, t)-\bfe||^2]+C_1-C_2. \text{ (by Lemma \ref{lemma:dsm})}
    \end{align*}
    Here $C_1$ and $C_2$ are constants independent of $\tilde{\bfe}$, which completes the proof.
\end{proof}

\subsection{Proof of Theorem \ref{thm:raw_loss}}
\begin{proof}
    Let the diffusion process $q_t(\bfx_t|\bfa)$ be defined in (\ref{equ:qt}) using reparameterization trick by sampling standard Gaussian noise $\bfe \sim \mathcal{N}(\mathbf{0}, \mathbf{I})$. We rewrite the training objective:
    \begin{align*}
    &\arg\min_\theta\mathbb{E}_{(\bfs,\bfa) \sim \mathcal{D}, t, \bfx_t \sim q_t(\bfx_t|\bfa)} [\frac{1}{2}||\bfe_\theta(\bfx_t, \bfs, t) - \bfe^*(\bfx_t, \bfs, t)||^2]\\
    &=\arg\min_\theta \mathbb{E}_{(\bfs,\bfa) \sim \mathcal{D}, t, \bfx_t \sim q_t(\bfx_t|\bfa)} [\frac{1}{2}||\bfe_\theta(\bfx_t, \bfs, t) + \sqrt{1-\bar{\alpha}_t}\nabla_{\bfx_t} \log p_t^* (\bfx_t|\bfs)||^2]\\
    &=\arg\min_\theta \mathbb{E}_{(\bfs,\bfa) \sim \mathcal{D}, t, \bfx_t \sim q_t(\bfx_t|\bfa)} [\frac{1}{2}||\bfe_\theta(\bfx_t, \bfs, t) + \sqrt{1-\bar{\alpha}_t}\nabla_{\bfx_t} \log p_t (\bfx_t|\bfs)\\
    &\hspace{5cm} + \frac{1}{\eta} \nabla_{\bfx_t}Q^{\pi^k}(\bfx_t, \bfs)||^2]\\
    &=\arg\min_\theta \mathbb{E}_{(\bfs,\bfa) \sim \mathcal{D}, t, \bfx_t \sim q_t(\bfx_t|\bfa)} [\frac{1}{2}||\bfe_\theta(\bfx_t, \bfs, t)||^2 + \bfe_\theta(\bfx_t, \bfs, t)\cdot\sqrt{1-\bar{\alpha}_t}\nabla_{\bfx_t} \log p_t (\bfx_t|\bfs)\\
    &\hspace{5cm}  + \bfe_\theta(\bfx_t, \bfs, t) \cdot \frac{1}{\eta} \nabla_{\bfx_t}Q^{\pi^k}(\bfx_t, \bfs)] + C_1
    \end{align*}
    Similar to the proof of Theorem \ref{thm:diffusion}, we can rewrite the second term to obtain:
    \begin{align*}
    \mathbb{E}_{t, \bfx_t\sim p_t(\cdot|\bfs)}&[\bfe_\theta(\bfx_t, \bfs, t)\cdot\sqrt{1-\bar{\alpha}_t}\nabla_{\bfx_t} \log p_t (\bfx_t|\bfs)]\\
    &=\sqrt{1-\bar{\alpha}_t}\int_{\bfx_t}p_t(\bfx_t|\bfs)\bfe_\theta(\bfx_t, \bfs, t)(\bfx_t, \bfs, t)\cdot\nabla_{\bfx_t} \log p_t (\bfx_t|\bfs)d \bfx_t\\
    &=\sqrt{1-\bar{\alpha}_t}\int_{\bfx_t}\bfe_\theta(\bfx_t, \bfs, t)\cdot\nabla_{\bfx_t} p_t(\bfx_t|\bfs)d \bfx_t\\
    &=\sqrt{1-\bar{\alpha}_t}\int_{\bfx_t}\bfe_\theta(\bfx_t, \bfs, t)\cdot\nabla_{\bfx_t} \int_\bfa q_t(\bfx_t|\bfa)\pi_\beta(\bfa|\bfs) d\bfa d \bfx_t\\
    &=\sqrt{1-\bar{\alpha}_t}\int_{\bfx_t}\bfe_\theta(\bfx_t, \bfs, t)\cdot\int_\bfa q_t(\bfx_t|\bfa)\nabla_{\bfx_t}\log q_t(\bfx_t|\bfa) \pi_\beta(\bfa|\bfs) d\bfa d \bfx_t\\
    &=\sqrt{1-\bar{\alpha}_t}\int_{\bfx_t}\int_\bfa \bfe_\theta(\bfx_t, \bfs, t)\cdot \nabla_{\bfx_t}\log q_t(\bfx_t|\bfa)q_t(\bfx_t|\bfa) \pi_\beta(\bfa|\bfs) d\bfa d \bfx_t\\
    &=\mathbb{E}_{\bfa\sim\pi_\beta, t, \bfx_t\sim q_t(\cdot|\bfa)}[\bfe_\theta(\bfx_t, \bfs, t)\cdot \sqrt{1-\bar{\alpha}_t}\nabla_{\bfx_t}\log q_t(\bfx_t|\bfa)]\\
    &=\mathbb{E}_{\bfa\sim\pi_\beta, t, \bfx_t\sim q_t(\cdot|\bfa)}[- \bfe_\theta(\bfx_t, \bfs, t)\cdot\bfe]  \text{ (by Lemma \ref{lemma:dsm})}.\\
    \end{align*}
    Therefore, we have:
    \begin{align*}
    &\arg\min_\theta\mathbb{E}_{(\bfs,\bfa) \sim \mathcal{D}, t, \bfx_t \sim q_t(\bfx_t|\bfa)} [\frac{1}{2}||\bfe_\theta(\bfx_t, \bfs, t) - \bfe^*(\bfx_t, \bfs, t)||^2]\\
    &=\arg\min_\theta \mathbb{E}_{(\bfs,\bfa) \sim \mathcal{D}, t, \bfx_t \sim q_t(\bfx_t|\bfa)} [\frac{1}{2}||\bfe_\theta(\bfx_t, \bfs, t)||^2 + \bfe_\theta(\bfx_t, \bfs, t)\cdot\sqrt{1-\bar{\alpha}_t}\nabla_{\bfx_t} \log q_t (\bfx_t|\bfa)\\
    &\hspace{5cm}  + \bfe_\theta(\bfx_t, \bfs, t) \cdot \frac{1}{\eta} \nabla_{\bfx_t}Q^{\pi^k}(\bfx_t, \bfs)] + C_1\\
    &=\arg\min_\theta \mathbb{E}_{(\bfs,\bfa) \sim \mathcal{D}, t, \bfx_t \sim q_t(\bfx_t|\bfa)} [\frac{1}{2}||\bfe_\theta(\bfx_t, \bfs, t)||^2 - \bfe_\theta(\bfx_t, \bfs, t)\cdot\bfe\\
    &\hspace{5cm}  + \bfe_\theta(\bfx_t, \bfs, t) \cdot \frac{1}{\eta} \nabla_{\bfx_t}Q^{\pi^k}(\bfx_t, \bfs)] + C_1\\
    &=\arg\min_\theta \mathbb{E}_{(\bfs,\bfa) \sim \mathcal{D}, t, \bfx_t \sim q_t(\bfx_t|\bfa)} [\frac{1}{2}||\bfe_\theta(\bfx_t, \bfs, t) - \bfe
     + \frac{1}{\eta} \nabla_{\bfx_t}Q^{\pi^k}(\bfx_t, \bfs)||^2] + C_1 - C_2\\
    &=\arg\min_\theta \mathbb{E}_{(\bfs,\bfa) \sim \mathcal{D}, t, \bfe \sim \mathcal{N}(\mathbf{0}, \mathbf{I})} [\frac{1}{2}||\bfe_\theta(\bfx_t, \bfs, t) - \bfe
     + \frac{1}{\eta} \nabla_{\bfx_t}Q^{\pi^k}(\bfx_t, \bfs)||^2] + C_1 - C_2,\\
    \end{align*}
    where $\bfx_t = \sqrt{\bar{\alpha}_t}\bfa + \sqrt{1-\bar{\alpha}_t} \bfe$ is given by the reparameterization trick. $C_1$ and $C_2$ are constants independent of $\bfe_\theta$, which completes the proof.

\end{proof}

\section{Experimental Details}\label{apx:exp_detail}
We train our algorithm for 2 million gradient steps in order to ensure model convergence. For each environment, we carry out 8 independent training processes, with each process evaluating performance using 10 different seeds at intervals of 10,000 gradient steps. This leads to a total of 80 rollouts for each evaluation. We report the average score of evaluations in the last 50,000 gradient steps without any early-stopping selection, which fairly reflects the true performance after convergence. We perform our experiments on two GeForce RTX 4090 GPUs, with each experiment taking approximately 4 hours to complete, including both the training and evaluation processes. Our code implementation is built upon the jaxrl \citep{jaxrl} code base.

\subsection{Network Architecture}
We employ simple 3 layer MLP with hidden dimension of 256 and Mish \citep{misra2019mish} activation for both the actor and critic networks. To enhance training stability, we implement target networks for both actor and critic, which track the exponential moving average (EMA) of the training networks. Specifically, we initialize the target networks $\bfe_{\bar{\theta}}$ and $Q_{\bar{\phi}^h_k}$ with the same seed as training networks $\bfe_\theta$ and $Q_{\bar{\phi}^h_k}$ respectively. We update the target actor network $\bfe_{\bar{\theta}}$ every 5 gradient steps while update the target critic networks $Q_{\bar{\phi}^h_k}$ after each gradient step to further ensure training stability. 

\subsection{Hyperparameters}
We maintain consistent hyperparameter settings for the diffusion models and networks across all tasks. The hyperparameter settings are as follows:

\begin{table}[!ht]
    \centering
    \renewcommand{\arraystretch}{1.1}
    \caption{Hyperparameters for all networks and tasks.}\label{tab:model_hyper}
    \begin{tabular}{lc}
        \hline
        \textbf{Hyperparameter}  & \textbf{Value}\\
        \hline
        T (Diffusion Steps)  & 5\\
        $\beta_t$ (Noise Schedule)  & Variance Preserving \citep{song2020score}\\
        H (Ensemble Size)  & 10\\
        B (Batch Size)  & 256\\
        Learning Rates (for all networks)  & 3e-4, 1e-3 (antmaze-large)\\
        Learning Rate Decay & Cosine \citep{loshchilov2016sgdr}\\
        Optimizer  & Adam \citep{kingma2014adam} \\
        $\eta_\text{init}$ (Initial Behavior Cloning Strength)  & [0.1, 1]\\
        $\alpha_\eta$ (for Dual Gradient Ascent)  & 0.001\\
        $\alpha_\text{ema}$ (EMA Learning Rate)  & 5e-3\\
        $N_a$ (Number of sampled actions for evaluation)  & 10\\
        $b$ (Behavior Cloning Threshold)  & [0.05, 1]\\
        $\rho$ (Pessimistic factor)  & [0, 2]\\
        \hline
    \end{tabular}
\end{table}

Regarding the pessimistic factor $\rho$, we empirically find that selecting the smallest possible value for $\rho$ without causing divergence in Q-value estimation yields good outcomes, as the learned target policy already avoids sampling OOD actions. This makes the tuning of $\rho$ to be relatively straightforward. In terms of policy-regularization, DAC controls the trade-off between behavior cloning and policy improvement using either a constant $\eta\equiv\eta_\text{init}$ or learnable $\eta$ by setting $b$ for dual gradient ascent (Algorithm \ref{alg:dac}). For locomotion tasks, we employ dual gradient ascent which dynamically adjust $\eta$ to fulfil the policy constraint. As for antmaze tasks, we choose constant $\eta\equiv\eta_\text{init}$ during the training process. Moreover, since different tasks involve varying action dimensions, we choose different hyperparameters for each task. We consider values of $\eta_\text{init} \in [0.1, 1]$, $b \in [0.05, 1]$ and $\rho \in [0, 2]$. We summarize the hyperparameter settings for the reported results in Table \ref{tab:hyper}.

\begin{table}[t]
    \centering
    \renewcommand{\arraystretch}{1.1}
    \caption{Hyperparameters settings for tasks.}\label{tab:hyper}
    \begin{tabular}{lcccc}
        \hline
        \textbf{Tasks}  & $\boldsymbol{b}$ & $\boldsymbol{\eta}$& $\boldsymbol{\rho}$ & \textbf{Regularization Type} \\
        \hline
        hopper-medium-v2  & 1  & - & 1.5& Learnable\\
        hopper-medium-replay-v2  & 1  & - & 1.5& Learnable\\
        hopper-medium-expert-v2  & 0.05  & - & 1.5& Learnable\\
        \hline
        walker2d-medium-v2  & 1  & - & 1& Learnable\\
        walker2d-medium-replay-v2  & 1  & - & 1& Learnable\\
        walker2d-medium-expert-v2  & 1  & - & 1& Learnable\\
        \hline
        halfcheetah-medium-v2  & 1  & - & 0& Learnable\\
        halfcheetah-medium-replay-v2  & 1  & - & 0& Learnable\\
        halfcheetah-medium-expert-v2  & 0.1  & - & 0& Learnable\\
        \hline
        antmaze-umaze-v0  & -  & 0.1 & 1& Constant\\
        antmaze-umaze-diverse-v0  & - & 0.1 & 1& Constant\\
        \hline
        antmaze-medium-play-v0  & -  & 0.1 & 1& Constant\\
        antmaze-medium-diverse-v0  & -  & 0.1 & 1& Constant\\
        \hline
        antmaze-large-play-v0  & - & 0.1 & 1.1& Constant\\
        antmaze-large-diverse-v0  & - & 0.1 & 1& Constant\\
        \hline
    \end{tabular}
\end{table}

\subsection{Value Target Estimation}
For the critic learning, we need to sample $\bfa' \sim \pi_{\theta_k}(\cdot | \bfs')$ to estimate the value target in (\ref{equ:policy_eval}). To enhance the training stability, we samples $M=10$ actions from $\pi_{\theta_k}(\cdot | \bfs')$ through denoising process. As for locomotion tasks, we calculate the average value $1/M\sum_i^M[Q_{\bar{\phi}_k^h}(\bfs', \bfa'_i)]$ over the sampled actions to estimate the target Q-value.  While for antmaze tasks, we use the maximum $\max_{\bfa'_1, \dots, \bfa'_M}[Q_{\bar{\phi}_k^h}(\bfs', \bfa'_i)]$ to address the problem of reward sparsity, which is consistent with previous research \citep{wang2022diffusion}.

\subsection{Q-gradient Guidance}
To fairly assess the performance of different Q-guidance, as shown in the 2D-bandit example (Figure \ref{fig:soft_q_guidance}) and an ablation study in Section \ref{sec:ablation}, we modify the actor learning loss of DAC while keeping all the remaining settings the same. In the case of soft Q-guidance, we use the original loss of actor learning for DAC (\ref{equ:policy_imp}). For the hard Q-guidance, we modify (\ref{equ:policy_imp}) to remove the noise scale factor:
\begin{equation}\label{equ:hard_loss}
  \mathcal{L}_\text{hard}(\theta) =  \mathbb{E}_{(\bfs, \bfa) \sim \mathcal{D}, \bfe, t} \big [\eta ||\bfe_\theta(\bfx_t, \bfs, t) - \bfe||^2 + \bfe_\theta(\bfx_t, \bfs, t)\cdot \nabla_{\bfx_t} Q^{\pi_k}(\bfs, \bfx_t)\big].
\end{equation}
Regarding the denoised Q-guidance used in Diffusion Q-learning \citep{wang2022diffusion}, we use the following denoised Q-guidance loss:
\begin{equation}\label{equ:denoise_loss}
  \mathcal{L}_\text{denoised}(\theta) =  \mathbb{E}_{(\bfs, \bfa) \sim \mathcal{D}, \bfe, t} \big [\eta ||\bfe_\theta(\bfx_t, \bfs, t) - \bfe||^2 + \mathbb{E}_{\bfx_0\sim\pi_\theta} [Q^{\pi_k}(\bfs, \bfx_0)]\big],
\end{equation}
where the denoised action $\bfx_0$ is obtained through the denoising process used in DDPM, and the gradient $\partial \mathcal{L}_\text{hard}(\theta)/ \partial \theta$ will be back-propagated through the denoising path. All the Q-functions are re-scaled by an estimated constant $\frac{1}{\mathbb{E}_{(\bfs,\bfa) \sim \mathcal{D}}|Q^{\pi_k}(\bfs, \bfa)|}$ to remove the influence of different Q-value scales.

\subsection{2-D Bandit Example}\label{app_2d_eg}
For the 2-D bandit example (Figure \ref{fig:soft_q_guidance}), we generate 400 sub-optimal behavior actions by drawing samples from patterns with Gaussian noises. The reward values for each action are determined by the distances between action points and $(0.4, -0.4)$, i.e., $\text{reward} \sim - \sqrt{(x-0.4)^2 + (y+0.4)^2} + \mathcal{N}(\mathbf{0}, 0.5\mathbf{I})$. Therefore, the majority of actions are sub-optimal and the estimated gradient field will tend to promote the generation of out-of-distribution (OOD) actions. However, given that the true reward values associated with OOD actions are agnostic to the learner, a well-performing policy learned through offline RL should not deviate significantly from the behavior support. This highlights the superiority of DAC in this protocol. To reproduce the results presented in Figure \ref{fig:soft_q_guidance}, we train 20,000 gradient steps with a batch size of 128, a learning rate of 1e-3, a diffusion step $T=50$, and behavior cloning threshold $b = 1.3$ for all the methods.

\subsection{Reward Tuning}
We adhere to the reward tuning conventions for locomotion tasks in the previous research \citep{kostrikov2021offline}, which is defined as:
\begin{equation}\label{equ:r_tune}
    \tilde{r} = 1000 \times \frac{r}{\text{maximal trajectory return}-\text{minimal trajectory return}}.
\end{equation}
As for antmaze task, it faces the challenge of sparse rewards, with the agent receiving a reward of 1 upon reaching the goal and 0 otherwise. Previous methods typically subtracts a negative constants (such as -1) from the rewards to tackle the issue of reward sparsity \citep{kostrikov2021offline, wang2022diffusion}. However, we empirically find that DAC performs well for most tasks without the need for such reward tuning technique. In our experiments, we simply employ the same tuning method (\ref{equ:r_tune}) as the one used for locomotion tasks, which in fact scales the rewards by 1,000 for antmaze environments. This tuning method does not effectively tackle the problem of sparse rewards, which could potentially result in the inferior performance of DAC on the ``large'' antmaze tasks.

\section{Additional Experimental Results}\label{apx:add_exp}

\subsection{Results on Adroit and Kitchen Tasks}\label{apx:pen_kitchen}
We provide additional results of DAC on Kitchen and Adroit tasks from D4RL dataset. Here we also involve the reported results from Diffusion Q-learning (Diffusion QL) \cite{wang2022diffusion} as comparison. The results are summarized in Table (\ref{tab:pen_kitchen}), and the training curves are show in Figure \ref{fig:pen} and Figure \ref{fig:kit}. Given that Diffusion Q-learning reports the scores with online model selection, it might be unfair to make direct comparison. Nevertheless, we do observe that DAC outperforms Diffusion Q-learning in Adroit domain.

\begin{table}[h!]
    \centering
    \renewcommand{\arraystretch}{1.1}
    \caption{Additional experiments on Adroit and Kitchen tasks. Similar to locomotion and antmaze tasks, we report the convergent performance of DAC by averaging the scores in the last 50,000 gradient steps. Given that Diffusion Q-learning reports the scores with online model selection, it might be unfair to make direct comparison. However, we still observe that DAC outperforms Diffusion Q-learning in Adroit domain.}\label{tab:pen_kitchen}
    \begin{tabular}{lccccc}
    \hline
    \textbf{Tasks}  & \textbf{CQL} & \textbf{IQL} & \textbf{Diffusion QL} & \textbf{DTQL} & \textbf{DAC} \\
    \hline
    pen-human-v1 & 35.2 & 71.5 & 72.8 & 64.1 & \textbf{81.3} $\pm$ 4.9 \\
    pen-cloned-v1 & 27.2 & 37.3 & 57.3 & \textbf{81.3} & 63.9 $\pm$ 7.3 \\
    \hline
    \textbf{Adroit Average}  & 31.2 & 54.4 & 65.1 & \textbf{72.7} & \textbf{72.6} \\
    \hline
    kitchen-complete-v0 & 43.8 & 62.5 & 80.8 & \textbf{84.0} & 77.4 $\pm$ 4.7 \\
    kitchen-partial-v0 & 49.8 & 46.3 & 60.5 & \textbf{74.4} & 50.0 $\pm$ 5.7 \\
    kitchen-mixed-v0 & 51.0 & 51.0 & \textbf{62.6} & 60.5 & 60.2 $\pm$ 7.3 \\
    \hline
    \textbf{Kitchen Average}  & 48.2 & 53.3 & 69.0 & \textbf{73.0} & 62.6 \\
    \hline
    \end{tabular}
\end{table}

\subsection{Pessimistic Value Targets.} 

To demonstrate the importance of the LCB target in balancing the overestimation and underestimation of value targets, we compare a variant of DAC on locomotion tasks, wherein the LCB target is replaced by the ensemble minimum (min) \citep{fujimoto2018addressing, fujimoto2021minimalist, wang2022diffusion}.  The results are shown in Table \ref{tab:ablation_tar} and Figure \ref{fig:ablation_tar}. It is noteworthy that the variants of DAC without the LCB target also achieve competitive performance compared to prior research in many tasks. 

\begin{table}[h]
\scriptsize
\setlength\tabcolsep{3.2pt}
\renewcommand{\arraystretch}{1.2}
\caption{{\bf Ensemble Q-target ablation.} We compare LCB target against the minimum target (min). We also involve the best scores of the prior methods (SOTA) from Table \ref{tab:main_res} for comparison.}\label{tab:ablation_tar}
\centering
\begin{tabular}{l|ccc|ccc|ccc}
\hline
\multirow{2}{*}{\textbf{Q-Target}} & \multicolumn{3}{c}{\textbf{walker2d}} & \multicolumn{3}{c}{\textbf{hopper}} & \multicolumn{3}{c}{\textbf{halfcheetah}} \\  \cline{2-10}
       & \textbf{m} & \textbf{m-r} & \textbf{m-e} & \textbf{m} & \textbf{m-r} & \textbf{m-e} & \textbf{m} & \textbf{m-r} & \textbf{m-e} \\
\hline
SOTA    & 87.0  &  95.5  & 113.0 & 90.5 & 101.3 & 111.2 & 51.1 & 47.8 & 96.8  \\
Min    & 83.9 $\pm$ 0.22  &  66.3 $\pm$ 9.7  &  110.0 $\pm$ 0.2  & \textbf{100.8} $\pm$ 0.9  &  \textbf{102.9} $\pm$ 0.4   &  \textbf{111.3} $\pm$ 0.1   & 49.3 $\pm$ 0.3  &  43.1 $\pm$ 0.2   & 43.2 $\pm$ 0.1  \\
\textbf{LCB (Ours)}    & \textbf{96.8} $\pm$ 3.6 & \textbf{96.8} $\pm$ 1.0 & \textbf{113.6} $\pm$ 3.5 & \textbf{101.2} $\pm$ 2.0 & \textbf{103.1} $\pm$ 0.3 & \textbf{111.7} $\pm$ 1.0 & \textbf{59.1} $\pm$ 0.4 & \textbf{55.0} $\pm$ 0.2 & \textbf{99.1} $\pm$ 0.9 \\
\hline
\end{tabular}
\end{table}

\subsection{Additional Sensitivity Analysis of Hyperparameters}

\textbf{Diffusion step $T$}. We conduct experiments of DAC with different diffusion steps on hopper environments, as shown in Table \ref{tab:ab_T} and Figure \ref{fig:ab_T}. We find that increasing the diffusion steps gives similar performance. Furthermore, we observe that larger $T$ converges slower than smaller $T$, which contradicts the claim in Diffusion Q-learning. In our experiments, we use $T=5$ for all the tasks, and despite the shorter diffusion path, it is sufficient to achieve strong results.

\begin{table}[t]
    \centering
    \renewcommand{\arraystretch}{1.1}
    \caption{Sensitivity analysis of different length of diffusion steps.}\label{tab:ab_T}
    \begin{tabular}{lccc}
    \hline
    Diffusion step $T$ & 5 (ours) &	10	& 20 \\
    \hline
    hopper-medium-v2 & 101.2 &	102.0 &	99.7 \\
    hopper-medium-reply-v2 & 103.1 & 102.3 & 102.2 \\
    hopper-medium-expert-v2 & 111.7 & 110.2 & 110.1 \\
    \hline
    \end{tabular}
\end{table}

\begin{figure}[h]
  \centering
  \includegraphics[width=0.9\linewidth]{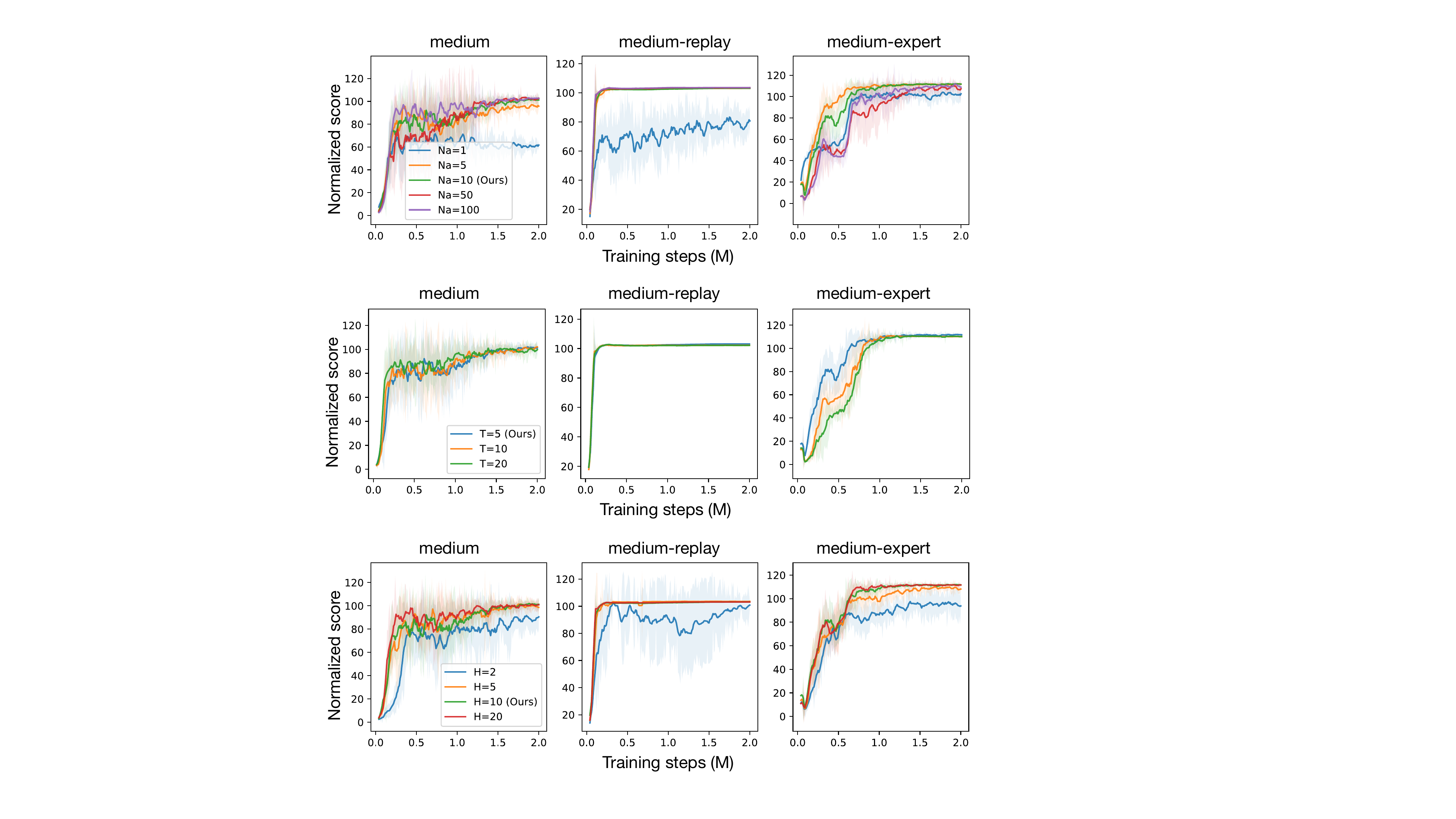}
  \caption{Sensitivity analysis of diffusion steps $T$ on hopper tasks.}
  \label{fig:ab_T}
\end{figure}

\begin{figure}[t]
  \centering
  \includegraphics[width=0.9\linewidth]{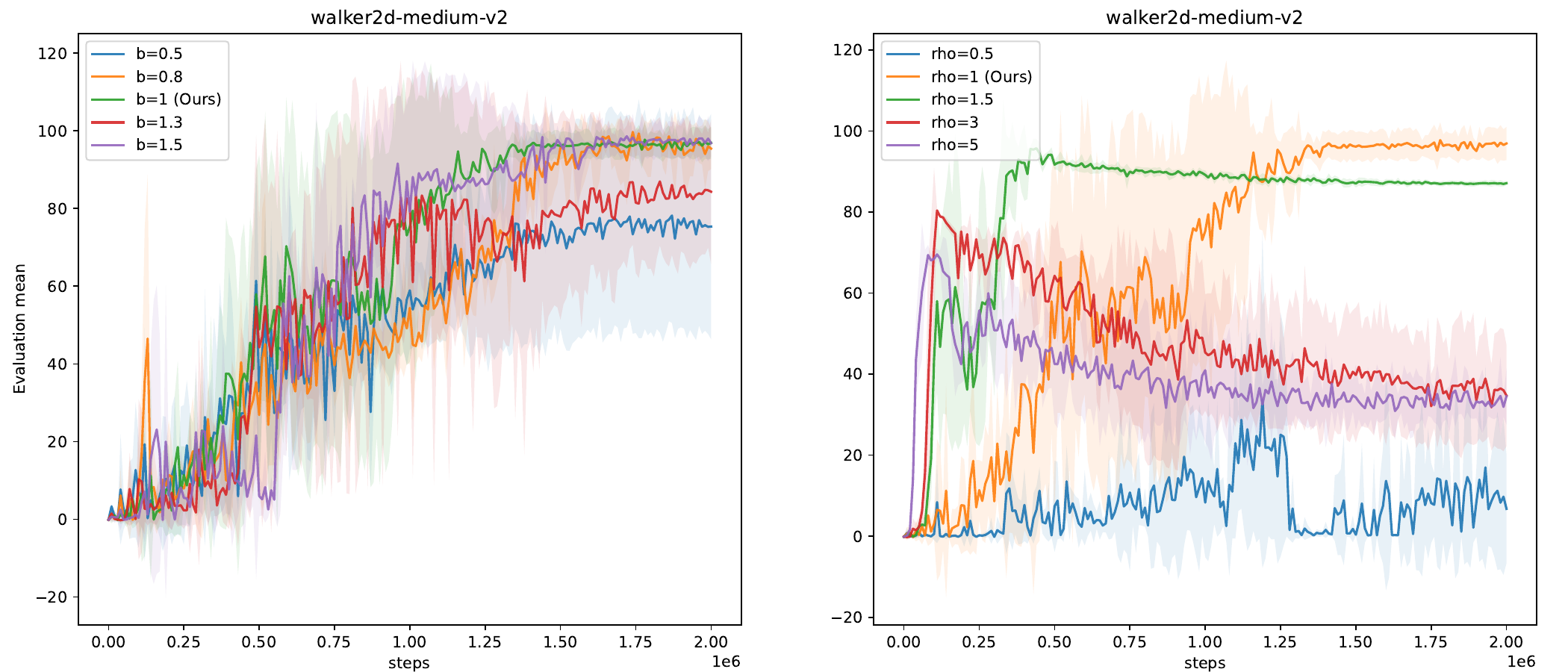}
  \caption{Sensitivity analysis of behavior cloning threshold $b$ (left) and pessimistic factor $\rho$ (right) on the walker2d-medium task.}
  \label{fig:rho_b}
\end{figure}

\textbf{Pessimistic factor $\rho$}. We compare the performance of DAC with different pessimistic factors $\rho$, which controls the pessimism of the LCB of value targets in (\ref{equ:policy_eval}). The results on walker2d tasks are shown in Figure \ref{fig:rho_b} (right). A small $\rho$ usually leads to overestimation of value targets and cause the Q-networks to diverge to $+\infty$, resulting in failed learning (as seen with $\rho=0.5$). While a larger $\rho$ may allow converge, underestimating value targets negatively impacts final performance. In our experiments, we empirically find that selecting the smallest $\rho$  that prevents divergence of the Q-functions is sufficient for strong results, making tuning $\rho$ straightforward in practice.

\textbf{Behavior cloning threshold $b$}. We also conduct a sensitivity analysis of $b$ which controls the strength of behavior cloning in walker2d environments, as shown in Figure \ref{fig:rho_b} (left). Given the constraint $||\bfe_\theta(\bfx_t, \bfs, t) - \bfe||^2 \leq b$, the agent is prone to generate behavior data with a smaller $b$.

\subsection{Training time comparison}

DAC trains significantly faster than Diffusion Q-learning without requiring gradient back-propagation through the denoising process. This time-saving advantage is more pronounced as the number of diffusion steps $T$ increases. To verify this, we conducted a experiment to assess the average training time ratio of performing a single-step batch update in DAC compared to Diffusion Q-learning. The results are shown in Table \ref{tab: training_time}. The experiment shows that when $T=5$, DAC requires less than half the time needed by Diffusion Q-learning to perform one-step updates. While for a diffusion step as large as 100, Diffusion Q-learning is more than 18 times slower compared with DAC. Considering that Diffusion Q-learning is unlikely to match DAC's performance with just one-tenth or even one-third of the training steps, it suggests that DAC can be trained more quickly.
\begin{table}[h!]
    \centering
    \renewcommand{\arraystretch}{1.1}
    \caption{Time ratio between soft Q-guidance and denoised Q-guidance for conducting one-step batch gradient descent with different lengths of diffusion steps.}
    \begin{tabular}{lccccc}
    \hline
    Diffusion step $T$ & 5  &	10	& 20 & 50 & 100\\
    \hline
    Time ratio & 45.1\%	& 31.4\% & 19.9\% & 9.7\% & 5.4\% \\
    \hline
    \end{tabular}
    \label{tab: training_time}
\end{table}

\subsection{Additional Training Curves}
We also involve additional training curves on D4RL tasks in Figure \ref{fig:antmaze}, Figure \ref{fig:pen} and Figure \ref{fig:kit}.

\begin{figure}[h]
  \centering
  \includegraphics[width=0.9\linewidth]{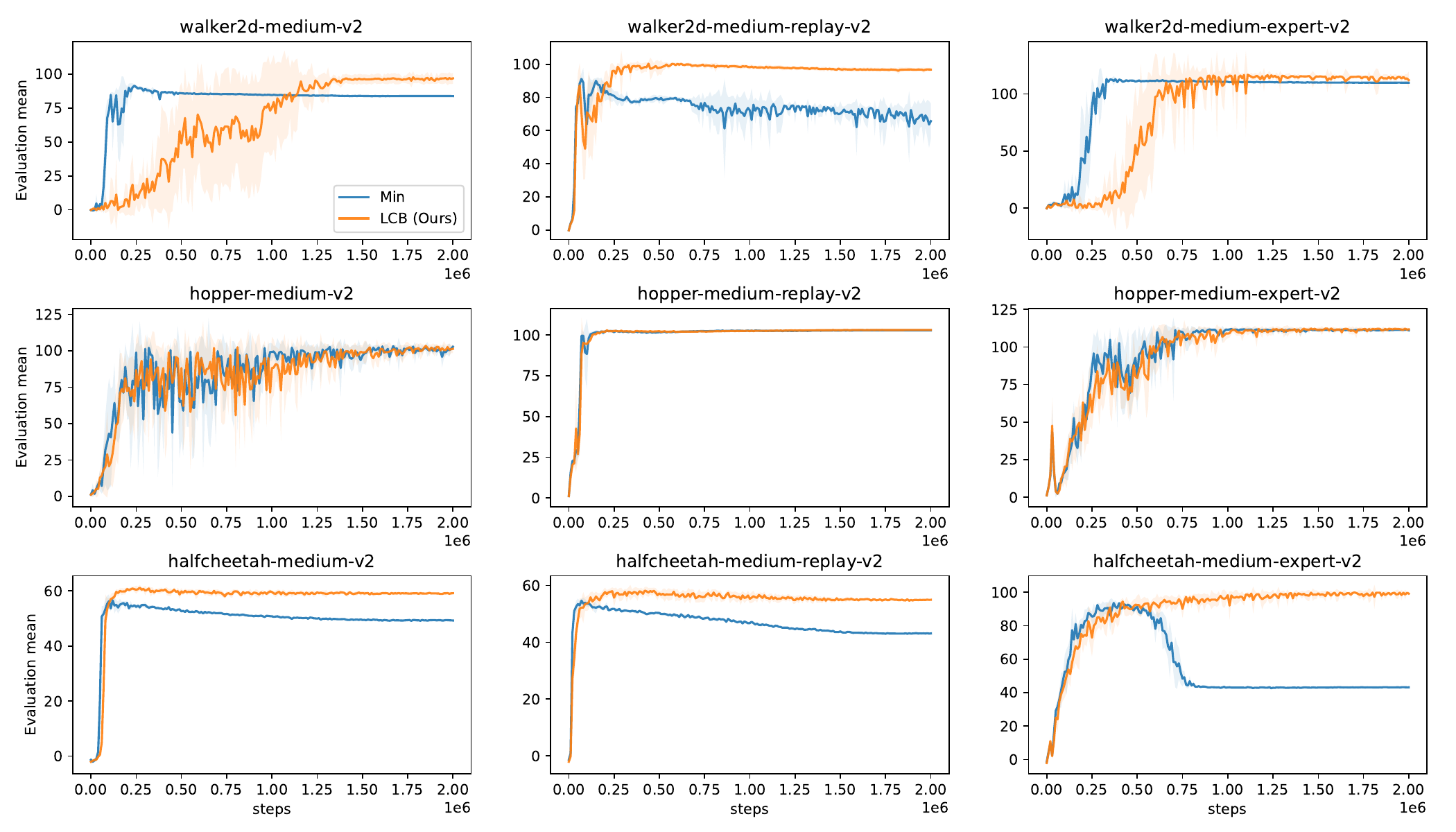}
  \caption{Value target ablation. We compare the LCB target against the target using ensemble minimum on locomotion tasks.}
  \label{fig:ablation_tar}
\end{figure}

\begin{figure}[h]
  \centering
  \includegraphics[width=0.9\linewidth]{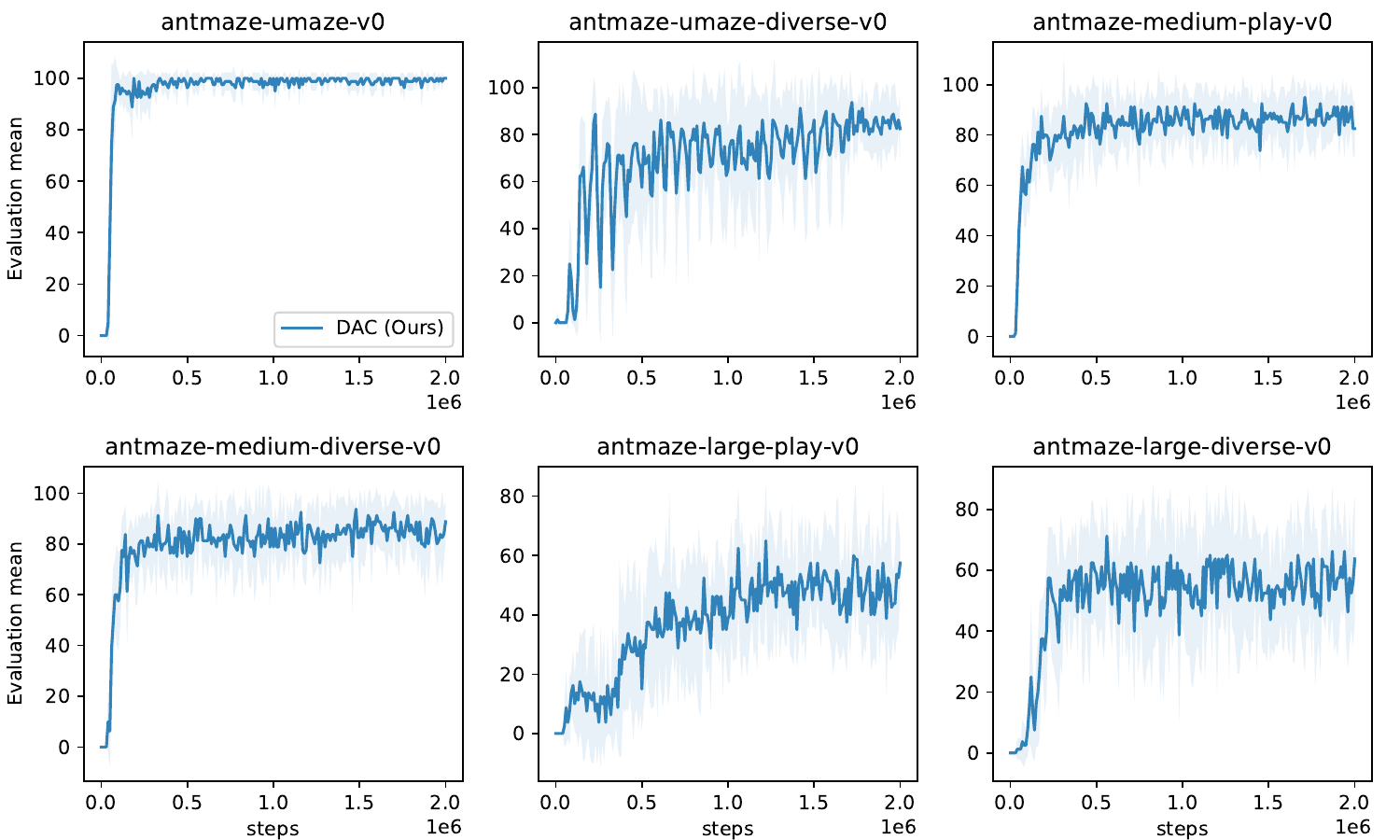}
  \caption{Training curves for antmaze tasks.}
  \label{fig:antmaze}
\end{figure}

\begin{figure}[h]
  \centering
  \includegraphics[width=0.85\linewidth]{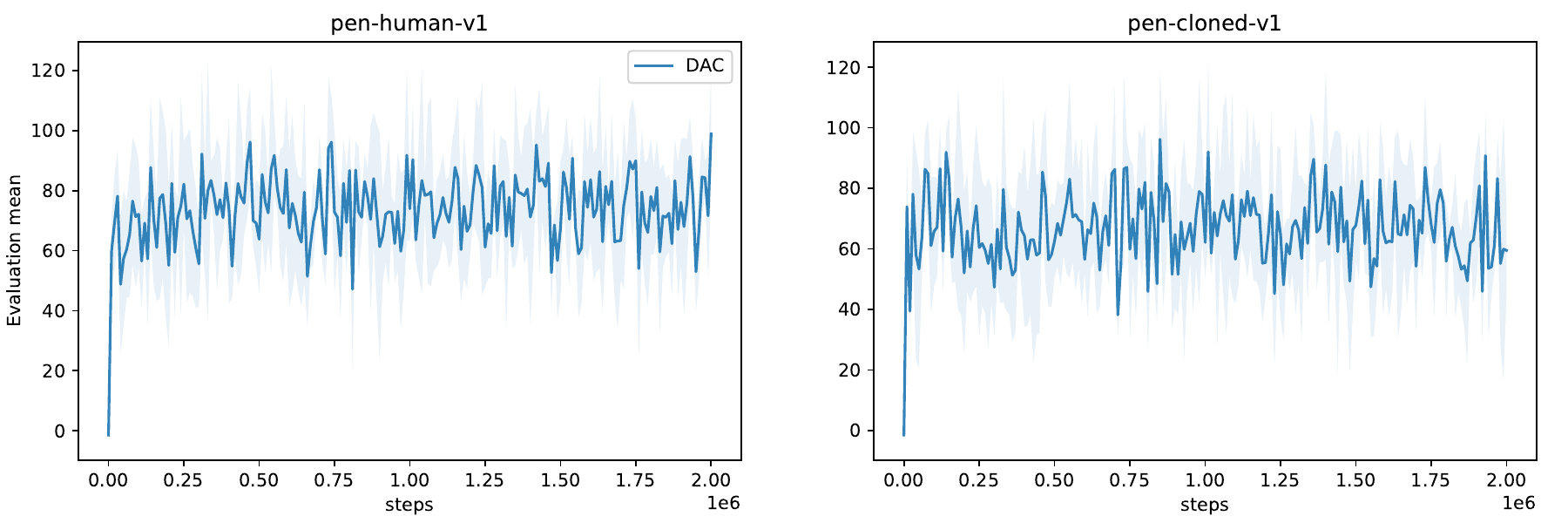}
  \caption{Training curves for Adroit tasks.}
  \label{fig:pen}
\end{figure}

\begin{figure}[h]
  \centering
  \includegraphics[width=0.85\linewidth]{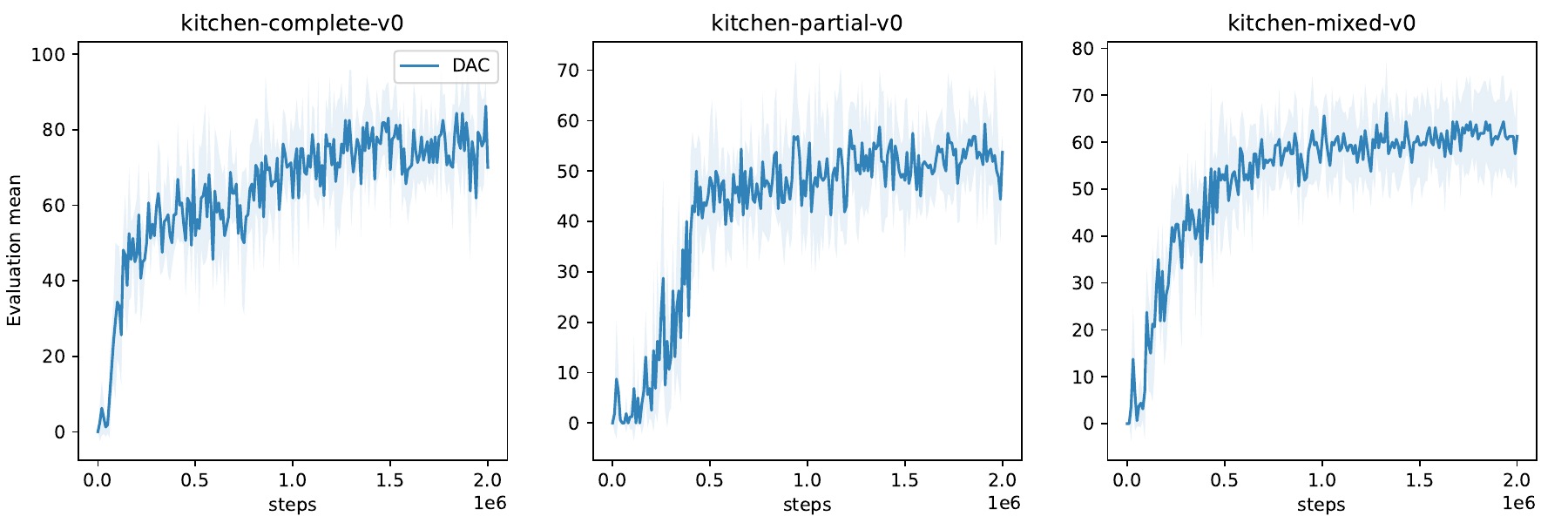}
  \caption{Training curves for Kitchen tasks.}
  \label{fig:kit}
\end{figure}

\end{document}